\documentclass{article}

\usepackage{microtype}
\usepackage{graphicx}
\usepackage{subfigure}
\usepackage{booktabs} %

\usepackage{hyperref}

\usepackage[accepted]{icml2024}

\usepackage{amsmath}
\usepackage{amssymb}
\usepackage{mathtools}
\usepackage{amsthm}

\usepackage[capitalize,noabbrev,nameinlink]{cleveref}

\theoremstyle{plain}
\newtheorem{theorem}{Theorem}[section]
\newtheorem{proposition}[theorem]{Proposition}
\newtheorem{lemma}[theorem]{Lemma}

\newtheorem{conjecture}[theorem]{Conjecture}
\theoremstyle{definition}
\newtheorem{definition}[theorem]{Definition}

\theoremstyle{remark}

\icmltitlerunning{How Private are DP-SGD implementations?}

\renewcommand{\texttt}[1]{{\fontfamily{cmtt}\selectfont #1}}

\usepackage{bm}
\usepackage{hyperref}
\hypersetup{
	colorlinks=true,
	linkcolor=blue,
	citecolor=blue, %
	filecolor=blue, %
	urlcolor=blue %
}

\usepackage{xspace}
\usepackage{enumitem}

\newcommand{\PARAMETERS}{\STATE \hspace{-3.5mm}{\bf Parameters:}\xspace}

\def\ddefloop#1{\ifx\ddefloop#1\else\ddef{#1}\expandafter\ddefloop\fi}
\def\ddef#1{\expandafter\def\csname #1\endcsname{\ensuremath{\mathbb{#1}}}}
\ddefloop ABCDFGHIJKLMNOQRSTUVWXYZ\ddefloop  %
\def\ddef#1{\expandafter\def\csname c#1\endcsname{\ensuremath{\mathcal{#1}}}}
\ddefloop ABCDEFGHIJKLMNOPQRSTUVWXYZ\ddefloop  %
\def\ddef#1{\expandafter\def\csname b#1\endcsname{\ensuremath{\bm #1}}}
\ddefloop ABCDEFGHIJKLMNOPQRSTUVWXYZabcdeghijklnopqrstuvwxyz\ddefloop  %

\newcommand{\eps}{\varepsilon}

\newcommand{\prn}[1]{\left ( #1 \right )}

\newcommand{\set}[1]{\left\{ #1 \right\}}

\newcommand{\Ncdf}{\Phi} %

\newcommand{\DP}{\mathsf{DP}}

\newcommand{\DPSGD}{\mathsf{DP}\text{-}\mathsf{SGD}}

\newcommand{\ABLQB}{\textsc{ABLQ}_{\cB}}
\newcommand{\ABLQD}{\textsc{ABLQ}_{\cD}}
\newcommand{\ABLQP}{\textsc{ABLQ}_{\cP}}
\newcommand{\ABLQS}{\textsc{ABLQ}_{\cS}}
\newcommand{\epsB}{\eps_{\cB}}
\newcommand{\epsD}{\eps_{\cD}}
\newcommand{\epsP}{\eps_{\cP}}
\newcommand{\epsS}{\eps_{\cS}}
\newcommand{\deltaB}{\delta_{\cB}}
\newcommand{\deltaD}{\delta_{\cD}}
\newcommand{\deltaP}{\delta_{\cP}}
\newcommand{\deltaS}{\delta_{\cS}}

\newcommand{\PD}{P_{\cD}}
\newcommand{\QD}{Q_{\cD}}
\newcommand{\PS}{P_{\cS}}
\newcommand{\QS}{Q_{\cS}}
\newcommand{\PP}{P_{\cP}}
\newcommand{\QP}{Q_{\cP}}

\begin{document}

\twocolumn[
\icmltitle{How Private are DP-SGD Implementations?}

\icmlsetsymbol{equal}{*}

\begin{icmlauthorlist}
\icmlauthor{Lynn Chua}{goog}
\icmlauthor{Badih Ghazi}{goog}
\icmlauthor{Pritish Kamath}{goog}
\icmlauthor{Ravi Kumar}{goog}
\icmlauthor{Pasin Manurangsi}{goog}
\icmlauthor{Amer Sinha}{goog}
\icmlauthor{Chiyuan Zhang}{goog}
\end{icmlauthorlist}

\icmlaffiliation{goog}{Google Research}

\icmlcorrespondingauthor{Pritish Kamath}{pritish@alum.mit.edu}
\icmlcorrespondingauthor{Pasin Manurangsi}{pasin@google.com}

\icmlkeywords{Machine Learning, ICML}

\vskip 0.3in
]

\printAffiliationsAndNotice{}  %

\begin{abstract}

We demonstrate a substantial gap between the privacy guarantees of the {\bf A}daptive {\bf B}atch {\bf L}inear {\bf Q}ueries (ABLQ) mechanism under different types of batch sampling: (i) Shuffling, and (ii) Poisson subsampling; the typical analysis of {\bf D}ifferentially {\bf P}rivate {\bf S}tochastic {\bf G}radient {\bf D}escent (DP-SGD) follows by interpreting it as a post-processing of ABLQ.
While shuffling-based DP-SGD is more commonly used in practical implementations, it has not been amenable to easy privacy analysis, either analytically or even numerically.
On the other hand, Poisson subsampling-based DP-SGD is challenging to scalably implement,
but has a well-understood privacy analysis, with multiple open-source numerically tight privacy accountants available.
This has led to a common practice of using shuffling-based DP-SGD in practice, but using the privacy analysis for the corresponding Poisson subsampling version.
Our result shows that there can be a substantial gap between the privacy analysis when using the two types of batch sampling, and thus advises caution in reporting privacy parameters for DP-SGD.
\end{abstract}

\section{Introduction}\label{sec:intro}

Using noisy gradients in first-order methods such as stochastic gradient descent (SGD) has become a prominent approach for adding differential privacy ($\DP$) to the training of differentiable models such as neural networks. This approach, introduced by \citet{abadi16deep}, has come to be known as Differentially Private Stochastic Gradient Descent, and we use the term $\DPSGD$ to refer to any such first-order method. $\DPSGD$ is currently the canonical algorithm for training deep neural networks with privacy guarantees, and there currently exist multiple open source implementations, such as in \citet{tf_privacy},
Pytorch Opacus~\citep{yousefpour21opacus},
and JAX Privacy~\cite{jax-privacy2022github}.
The algorithm has been applied widely in various machine learning domains, such as training of
image classification~\cite{tramer2020differentially,papernot2021tempered,klause2022differentially,de22unlocking,bu2022scalable},
generative models with GAN~\citep{Torkzadehmahani_2019_CVPR_Workshops,NEURIPS2020_9547ad6b},
diffusion models~\citep{dockhorn2022differentially},
language models~\cite{li2021large,yu2021differentially,anil22dpbert,he2022exploring},
medical imaging~\citep{ziller2021medical}, as well as private spatial querying~\citep{zeighami2021neural}, ad modeling~\cite{denison23ad}, and recommendation~\citep{fang2022differentially}.

\begin{figure}
\centering
\includegraphics[width=8cm]{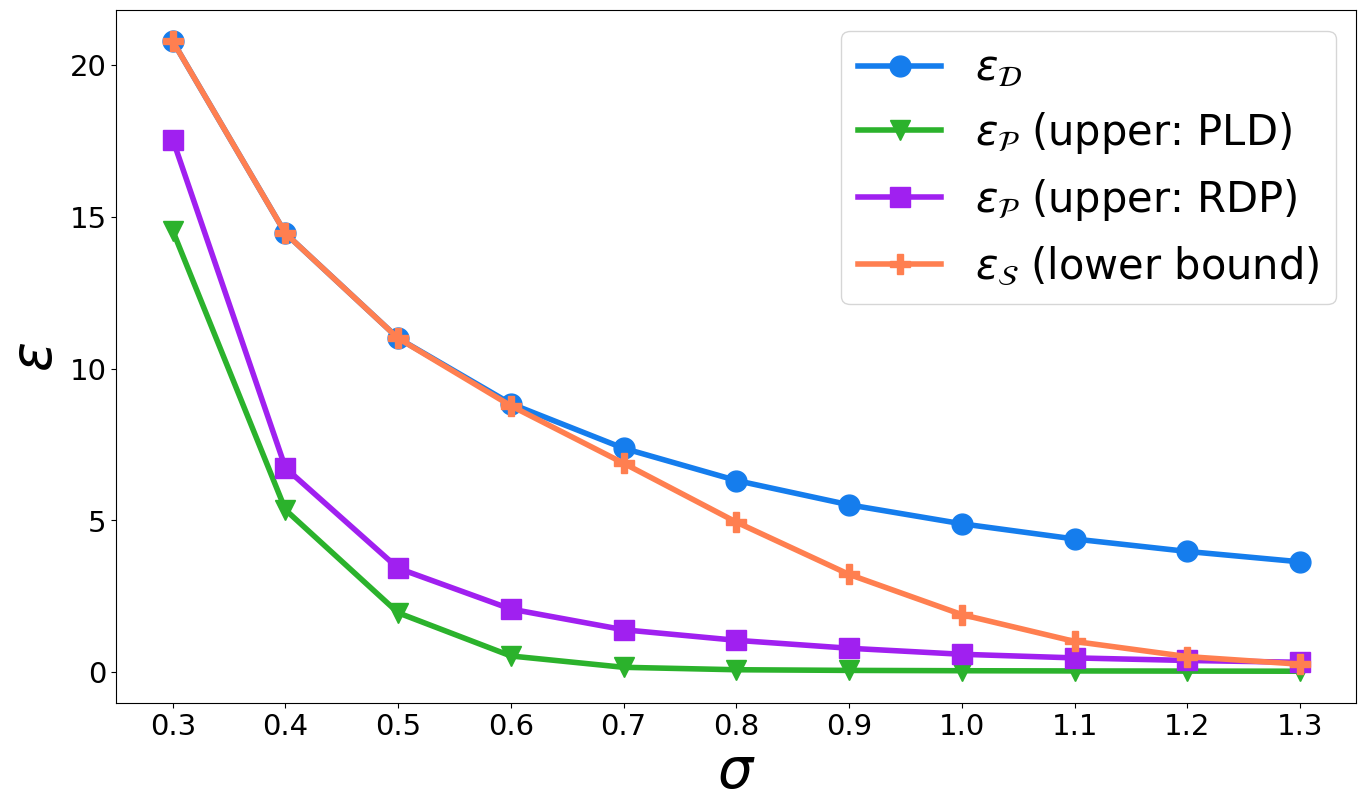}
\caption{Privacy parameter $\eps$ for different noise parameters $\sigma$, for fixed $\delta = 10^{-6}$ and number of steps $T = 10,000$. $\epsD$ : for deterministic batching, $\epsP$ : upper bounds when using Poisson subsampling (computed using different accountants), and $\epsS$ : a lower bound when using shuffling. We observe that shuffling does not provide much amplification for small values of $\sigma$, incurring significantly higher privacy cost  compared to Poisson subsampling.}
\label{fig:eps-vs-sigma-1}
\end{figure}

$\DPSGD$ operates by processing the training data in mini-batches, and at each step, performs a first-order gradient update, using a noisy estimate of the average gradient for each mini-batch. In particular, the gradient $g$ for each record is first clipped to have a pre-determined bounded $\ell_2$-norm, by setting $[g]_C := g \cdot \min\{1, C / \|g\|_2\}$, and then adding Gaussian noise of standard deviation $\sigma C$ to all coordinates of the sum of gradients in the mini-batch.\footnote{While some distributed training setup uses the \emph{sum} gradient directly, it is common to rescale the \emph{sum} gradient by the batch size to obtain the \emph{average} gradient before applying the optimization step. Since this scaling factor can be assimilated in the learning rate, we focus on the \emph{sum} gradient for simplicity in this paper. Note that, in case of $\DPSGD$ using Poisson subsampling, the scaling is done by the {\em expected} batch size, and not the realized batch size.} The privacy guaranteed by the mechanism depends on the following: the choice of $\sigma$, the size of dataset, the size of mini-batches, the number of steps of gradient update performed, and finally {\em the process used to generate the batches}.
Almost all deep learning systems  generate fixed-sized batches of data by going over the dataset sequentially. When feasible, a global shuffling of all the examples in the dataset is performed for each training epoch by making a single pass over the dataset. On the other hand, the process analyzed by \citet{abadi16deep} constructs each batch by including each record with a certain probability, chosen i.i.d. However, this leads to variable-sized mini-batches, which is technically challenging to handle in practice. As a result, there is generally a mismatch between the actual training pipeline and the privacy accounting in many applications of $\DPSGD$, with the implicit assumption that this subtle difference is negligible and excusable. However, in this paper, we show that this is not true---in a typical setting, as shown in \Cref{fig:eps-vs-sigma-1}, the privacy loss from the correct accounting is significantly larger than expected.

\begin{algorithm}[t]
\caption{$\ABLQB$: Adaptive Batch Linear Queries}
\label{alg:adaptive-batch-linear-queries}
\begin{algorithmic}
\PARAMETERS Batch sampler $\cB$ using (expected) batch size $b$ and number of batches $T$, noise parameter $\sigma$, and an (adaptive) query method $\cA : (\R^d)^* \times \cX \to \B^d$.
\REQUIRE Dataset $\bx = (x_1, \ldots, x_n)$.
\ENSURE Query estimates $g_1, \ldots, g_T \in \R^d$
\STATE $(S_1, \ldots, S_T) \gets \cB_{b,T}(n)$
\FOR{$t = 1, \ldots, T$}
    \STATE $\psi_t(\cdot) := \cA(g_1, \ldots, g_{t-1}; \cdot)$
    \STATE $g_t \gets \sum_{i \in S_t} \psi_t(x_i) + e_t$ for $e_t \sim \cN(0, \sigma^2 I_d)$
\ENDFOR
\RETURN $(g_1, \ldots, g_T)$
\end{algorithmic}
\end{algorithm}

\begin{algorithm}[t]
\caption{$\cD_{b, T}$: Deterministic Batch Sampler}
\label{alg:deterministic-batch}
\begin{algorithmic}
\PARAMETERS Batch size $b$, number of batches $T$.
\REQUIRE Number of datapoints $n = b \cdot T$.
\ENSURE Seq. of disjoint batches $S_1, \ldots, S_T \subseteq [n]$.
\FOR{$t = 0, \ldots, T-1$}
    \STATE $S_{t+1} \gets \{tb + 1, \ldots, tb + b\}$
\ENDFOR
\RETURN $S_1, \ldots, S_T$
\end{algorithmic}
\end{algorithm}

\paragraph{Adaptive Batch Linear Queries and Batch Samplers.}

Formally, the privacy analysis of $\DPSGD$, especially in the case of non-convex differentiable models, is performed by viewing it as a post-processing of a mechanism performing {\em adaptive batch linear queries} ($\ABLQB$) as defined in \Cref{alg:adaptive-batch-linear-queries} (we use subscript $\cB$ to emphasize the role of the batch sampler), where the linear query $\psi_t(x_i)$ corresponds to the clipped gradient corresponding to record $x_i$. For any $x$, we require that $\psi_t(x) \in \B^d := \{w \in \R^d : \|w\|_2 \le 1\}$.
Note that without loss of generality, we can treat the norm bound $C$ to be $1$ by defining $\psi_t(x) := [g]_C / C$ for the corresponding gradient $g$, and rescaling $g_t$ by $C$ to get back the noisy gradient for $\DPSGD$. 

As mentioned above, a canonical way to generate the mini-batches is to go through the dataset in a fixed deterministic order, and divide the data into mini-batches of a fixed size~(\Cref{alg:deterministic-batch}, denoted as $\cD$). Another commonly used option is to first randomly permute the entire dataset before dividing it into mini-batches of a fixed size~(\Cref{alg:shuffle-batch}, denoted as $\cS$); this option provides {\em amplification by shuffling}, namely, that the privacy guarantees are better compared to fixed deterministic ordering. However, obtaining such amplification bounds is non-trivial, and while some bounds have been recently established for such privacy amplification~\citep{erlingsson19amplification,feldman21hiding,feldman23stronger}, they tend to be loose in our setting and only kick in when the basic mechanism is already sufficiently private.%

\begin{algorithm}[t]
\caption{$\cS_{b,T}$: Shuffle Batch Sampler}
\label{alg:shuffle-batch}
\begin{algorithmic}
\PARAMETERS Batch size $b$, number of batches $T$.
\REQUIRE Number of datapoints $n = b \cdot T$.
\ENSURE Seq. of disjoint batches $S_1, \ldots, S_T \subseteq [n]$.
\STATE Sample a random permutation $\pi$ over $[n]$.
\FOR{$t = 0, \ldots, T-1$}
    \STATE $S_{t+1} \gets \{\pi(tb + 1), \ldots, \pi(tb + b)\}$
\ENDFOR
\RETURN $S_1, \ldots, S_T$
\end{algorithmic}
\end{algorithm}

\begin{algorithm}[t]
\caption{$\cP_{b,T}$: Poisson Batch Sampler}
\label{alg:poisson-batch}
\begin{algorithmic}
\PARAMETERS Expected batch size $b$, number of batches $T$.
\REQUIRE Number of datapoints $n$.
\ENSURE Seq. of batches $S_1, \ldots, S_T \subseteq [n]$.
\FOR{$t = 1, \ldots, T$}
    \STATE $S_{t} \gets \emptyset$
    \FOR{$i = 1, \ldots, n$}
        \STATE $S_{t} \gets \begin{cases}
            S_{t} \cup \{i\} & \text{ with probability } b/n\\
            S_{t} & \text{ with probability } 1 - b/n\\
        \end{cases}$
    \ENDFOR
\ENDFOR
\RETURN $S_1, \ldots, S_T$
\end{algorithmic}
\end{algorithm}

Instead, the approach followed by \citep{abadi16deep}, and henceforth used commonly in reporting privacy parameters for $\DPSGD$, is to assume that each batch is sampled i.i.d. by including each record with a certain probability, referred to as {\em Poisson subsampling}~(\Cref{alg:poisson-batch}, denoted as $\cP$). The advantage of this approach is that the privacy analysis of such sampling is easier to carry out since the $\ABLQP$ mechanism can be viewed as a composition of $T$ independent sub-mechanisms. This enables privacy accounting methods such as R\'enyi DP~\cite{mironov17renyi} as well as numerically tight accounting methods using privacy loss distributions (elaborated on later in \Cref{sec:dominating-pairs}).

This has been the case, even when the algorithm being implemented in fact uses some form of shuffling-based batch sampler. To quote \citet{abadi16deep} (with our emphasis),
\begin{quote}
``In practice, for efficiency, the construction of batches and lots is done by randomly permuting the examples and then \textsl{partitioning} them into groups of the appropriate sizes. \textsl{For ease of analysis}, however, we assume that \textsl{each lot is formed by independently picking each example with probability $q = L/N$}, where $N$ is the size of the input dataset.''
\end{quote}
Most implementations of $\DPSGD$ mentioned earlier also use some form of shuffling, with a rare exception of PyTorch Opacus~\cite{yousefpour21opacus} that has the option of Poisson subsampling to be consistent with the privacy analysis. But this approach does not scale to large datasets as random access for datasets that do not fit in memory is generally inefficient. Moreover, variable batch sizes are inconvenient to handle in deep learning systems\footnote{For example, when the input shape changes, \texttt{jax.jit} will trigger recompilation, and \texttt{tf.function} will retrace the graph. Google TPUs require all operations to have fixed (input and output) shapes. Moreover, in various form of data parallelism, the batch size needs to be divisible by the number of accelerators.}.
\citet{tf_privacy_statement} provides the \texttt{compute\_dp\_sgd\_privacy\_statement} method for computing the privacy parameters and reminds users about the implicit assumption of Poisson subsampling.
\citet{ponomareva23dpfy} note in their survey that ``It is common, though inaccurate, to train without Poisson subsampling, but to report the stronger DP bounds as if amplification was used.''

As $\DPSGD$ is being deployed in more applications with such discrepancy, it has become crucial to understand exactly how the privacy guarantee depends on the precise choice of the batch sampler, especially when one cares about specific $(\eps, \delta)$ privacy parameters (and not asymptotic bounds).
This leads us to our main motivating question:
\begin{quote}
{\sl How do the privacy guarantees of $\ABLQB$ compare when using different batch samplers $\cB$?}
\end{quote}

\subsection{Our Contributions}

We study the privacy guarantees of $\ABLQB$ for different choices of batch samplers $\cB$. While we defer the formal definition of $(\eps, \delta)$-$\DP$ to \Cref{sec:prelims}, let $\deltaB(\eps)$ denote the {\em privacy loss curve} of $\ABLQB$ for any $\cB \in \{\cD, \cP, \cS \}$, for a fixed choice of $\sigma$ and $T$. Namely, for all $\eps > 0$, let $\deltaB(\eps)$ be the smallest $\delta \ge 0$ such that $\ABLQB$ satisfies $(\eps, \delta)$-$\DP$ for any underlying adaptive query method $\cA$. Let $\epsB(\delta)$ be defined analogously.

\begin{description}[leftmargin=1mm,topsep=0pt,itemsep=0pt]
\item[\boldmath $\cD$ vs $\cS$.]
We observe that $\ABLQS$ always satisfies stronger privacy guarantees than $\ABLQD$, i.e., $\deltaS(\eps) \le \deltaD(\eps)$ for all $\eps \ge 0$.

\item[\boldmath $\cD$ vs $\cP$.]
We show that the privacy guarantee of $\ABLQD$ and $\ABLQP$ are incomparable.
Namely, for all values of $T$ (number of steps) and $\sigma$ (noise parameter), it holds (i) for small enough $\eps > 0$ that $\deltaP(\eps) < \deltaD(\eps)$, but perhaps more interestingly, (ii) for sufficiently large $\eps$ it holds that $\deltaP(\eps) \gg \deltaD(\eps)$.
We also demonstrate this separation in a specific numerical setting of parameters.

\item[\boldmath $\cS$ vs $\cP$.] By combining the above it follows that for sufficiently large $\eps$, it holds that $\deltaS(\eps) < \deltaP(\eps)$. If $\deltaS(\eps) < \deltaP(\eps)$ were to hold for all $\eps > 0$, then reporting privacy parameters for $\ABLQP$ would provide correct, even if pessimistic, privacy guarantees for $\ABLQS$.
However, we demonstrate multiple concrete settings of parameters, for which $\deltaP(\eps) \ll \deltaS(\eps)$, or alternately, $\epsP(\delta) \ll \epsS(\delta)$.

For example, in \Cref{fig:eps-vs-sigma-1}, we fix $\delta = 10^{-6}$ and the number of steps $T = 10,000$,\footnote{Recall that since we are analyzing a ``single epoch'', the subsampling probability of Poisson subsampling is $b/n = 1/T$.} and compare the value of $\epsB(\delta)$ for various values of $\sigma$. For $\sigma = 0.5$, we find $\epsP(\delta) < 1.96$ (PLD) and $\epsP(\delta) < 3.43$ (RDP), but $\epsS(\delta) > 10.994$ and $\epsD(\delta) \approx 10.997$. For $\sigma = 1.3$, we find $\epsP(\delta) < 0.031$ (PLD), whereas, $\epsS(\delta) > 0.26$.

This suggests that reporting privacy guarantees using the Poisson batch sampler can significantly underestimate the privacy loss when the implementation in fact uses the shuffle batch sampler.
\end{description}

Our main takeaway is that batch sampling plays a crucial in determining the privacy guarantees of $\ABLQB$, and hence caution must be exercised in reporting privacy parameters for mechanisms such as $\DPSGD$.

\subsection{Technical Overview}\label{subsec:technical-overview}

Our techniques relies on the notion of {\em dominating pairs} as defined by \citet{zhu22optimal} (see \Cref{def:dominating-pair}), which if {\em tightly dominating} captures the privacy loss curve $\deltaB(\eps)$.

\begin{description}[leftmargin=*,topsep=0pt,itemsep=0pt]
\item[{\boldmath $\cD$ vs $\cS$.}]
We observe that applying a mechanism on a random permutation of the input dataset does not degrade its privacy guarantees.
While standard, we include a proof for completeness.%

\item[{\boldmath $\cD$ vs $\cP$.}]
In order to show that $\deltaP(\eps) < \deltaD(\eps)$ for small enough $\eps > 0$, we first show that $\deltaP(0) < \deltaD(0)$ by showing that the total variation distance between the tightly dominating pair for $\ABLQD$ is larger than that in the case of $\ABLQP$. And thus, by the continuity of hockey stick divergence in $\eps$, we obtain the same for small enough $\eps$.

In order to show that $\deltaP(\eps) > \deltaD(\eps)$ for large enough $\eps > 0$, we demonstrate an explicit set $E$ (a halfspace), which realizes a lower bound on the hockey stick divergence between the tightly dominating pair of $\ABLQP$, and show that this decays slower than the hockey stick divergence between the tightly dominating pair for $\ABLQD$.

We demonstrate the separation in a specific numerical setting of parameters using the \texttt{dp\_accounting} library \cite{GoogleDP} to provide lower bounds on the hockey stick divergence between the tightly dominating pair for $\ABLQP$.

\item[{\boldmath $\cS$ vs $\cP$.}]
One challenge in understanding the privacy guarantee of $\ABLQS$ is the lack of a clear dominating pair for the mechanism. Nevertheless, we consider a specific instance of the query method $\cA$, and a specific adjacent pair $\bx \sim \bx'$. The key insight we use in constructing this $\cA$ and $\bx, \bx'$ is that in the case of $\ABLQS$, since the batches are of a fixed size, the responses to queries on batches containing only the non-differing records in $\bx$ and $\bx'$ can {\em leak} information about the location of the differing record in the shuffled order. This limitation is not faced by $\ABLQP$, since each record is independently sampled in each batch.
We show a lower bound on the hockey stick divergence between the output distribution of the mechanism on these adjacent inputs, by constructing an explicit set $E$, thereby obtaining a lower bound on $\deltaS(\eps)$.

In order to show that this can be significantly larger than the privacy guarantee of $\ABLQP$, we again use the \texttt{dp\_accounting} library, this time to provide upper bounds on the hockey stick divergence between the dominating pair of $\ABLQP$.

\end{description}

We provide the IPython notebook\footnote{\scriptsize \url{https://colab.research.google.com/drive/1zI2H8YEXbQyD6gZVVskFwcOiM5YMvqRe?usp=sharing}} that was used for all the numerical demonstrations; the notebook can be executed using a free CPU runtime on \citet{googlecolab}.

\paragraph{Related work.} The phenomenon of {\em non-differing} records leaking information about whether the differing record is in a batch or not can also exist in sampling-based batch samplers, such as, sampling independent batches of fixed-size, as studied in a concurrent work by \citet{lebeda2024avoiding}. However, we focus on the shuffle-based batch sampler, as these are most common in practical implementations.%

\section{Differential Privacy}\label{sec:prelims}

We consider mechanisms that map input datasets to distributions over an output space, namely $\cM : \cX^* \to \Delta_{\cO}$. That is, on input {\em dataset} $\bx = (x_1, \ldots, x_n)$ where each {\em record} $x_i \in \cX$, $\cM(\bx)$ is a probability distribution over the output space $\cO$; we abuse notation to also use $\cM(\bx)$ to denote the underlying random variable.
Two datasets $\bx$ and $\bx'$ are said to be {\em adjacent}, denoted $\bx \sim \bx'$, if, loosely speaking, they ``differ in one record''. This can be formalized in multiple ways, which we elaborate on in \Cref{subsec:adjacency}, but for any notion of adjacency, Differential Privacy (DP) can be defined as follows.
\begin{definition}[DP]\label{def:dp}
For $\eps, \delta \ge 0$, a mechanism $\cM$ satisfies $(\eps, \delta)$-$\DP$ if for all ``adjacent'' datasets $\bx \sim \bx'$, and for any (measurable) event $E$ it holds that
\[
\Pr[\cM(\bx) \in E] ~\le~ e^{\eps} \Pr[\cM(\bx') \in E] + \delta.
\]
\end{definition}
For any mechanism $\cM$, let $\delta_{\cM} : \R_{\ge 0} \to [0, 1]$ be its {\em privacy loss curve}, namely $\delta_\cM(\eps)$ is the smallest $\delta$ for which $\cM$ satisfies $(\eps, \delta)$-$\DP$; $\eps_{\cM} : [0, 1] \to \R_{\ge 0}$ can be defined analogously.

\subsection{Adaptive Batch Linear Queries Mechanism}\label{subsec:ablq}

We primarily study the {\em adaptive batch linear queries} mechanism $\ABLQB$ using a {\em batch sampler} $\cB$ and an {\em adaptive query method $\cA$}, as defined in \Cref{alg:adaptive-batch-linear-queries}.
The batch sampler $\cB$ can be instantiated with any algorithm that produces a (randomized) sequence $S_1, \ldots, S_T \subseteq [n]$ of batches, where $n$ is the number of examples. $\ABLQB$ produces a sequence $(g_1, \ldots, g_T)$ of responses where each $g_i \in \R^d$. The response $g_t$ is produced recursively using the adaptive query method $\cA$ that given $g_1, \ldots, g_{t-1}$, constructs a new query $\psi_t : \cX \to \B^d$ (for $\B^d := \{ v \in \R^d : \|v\|_2 \le 1\}$), and we estimate the sum of $\psi_t(x)$ over the batch $S_t$ with added zero-mean Gaussian noise of scale $\sigma$ to all coordinates. As explained in \Cref{sec:intro}, $\DPSGD$ falls under this abstraction by considering an adaptive query method that is specified by a differentiable loss function $f : \R^d \times \cX \to \R$, and starts with an initial $w_0 \in \R^d$, and defines the query $\cA(g_1, \ldots, g_{t-1}; x)$ as the clipped gradient $[\nabla_w f_{w_{t-1}}(x)]_1$ where $w_t$ is the $t$th model iterate recursively obtained by performing gradient descent, e.g., $w_t \gets w_{t-1} - \eta_t g_t$ (or any other first-order optimization step).

We consider the Deterministic $\cD$ (\Cref{alg:deterministic-batch}), Poisson $\cP$ (\Cref{alg:poisson-batch}) and Shuffle $\cS$ (\Cref{alg:shuffle-batch}) batch samplers. As used in \Cref{sec:intro}, we will continue to use $\deltaB(\eps)$ as a shorthand for denoting the {\em privacy loss curve} of $\ABLQB$ for any $\cB \in \{\cD, \cP, \cS \}$. Namely, for all $\eps > 0$, let $\deltaB(\eps)$ be the smallest $\delta \ge 0$ such that $\ABLQB$ satisfies $(\eps, \delta)$-$\DP$ for {\em all} choices of the underlying adaptive query method $\cA$. And $\epsB(\delta)$ is defined analogously.

\subsection{Adjacency Notions}\label{subsec:adjacency}
As alluded to earlier, the notion of adjacency is crucial to \Cref{def:dp}. Commonly used adjacency notions are:
\begin{description}[leftmargin=0pt,topsep=0pt,itemsep=0pt]
\item [Add-Remove adjacency.] Datasets $\bx, \bx' \in \cX^*$ are said to be {\em add-remove} adjacent if there exists an $i$ such that $\bx' = \bx_{-i}$ or vice-versa (where $\bx_{-i}$ represents the dataset obtained by removing the $i$th record in $\bx$).
\item [Substitution adjacency.] Datasets $\bx, \bx' \in \cX^*$ are said to be {\em substitution} adjacent if there exists an $i$ such that $\bx'_{-i} = \bx_{-i}$ and $x_i' \ne x_i$. 
\end{description}

The privacy analysis of $\DPSGD$ is typically done for the Poisson batch sampler $\cP$ \cite{abadi16deep,mironov17renyi}, with respect to the {\em add-remove} adjacency. However, it is impossible to analyze the privacy of $\ABLQD$ or $\ABLQS$ with respect to the {\em add-remove} adjacency because the batch samplers $\cD$ and $\cS$ require that the number of records $n$ equals $b \cdot T$. On the other hand, using the {\em substitution} adjacency for $\cD$ and $\cS$ leads to an unfair comparison to $\ABLQP$ whose analysis is with respect to the {\em add-remove} adjacency. Thus, to make a fair comparison, we consider the following adjacency (proposed by \citet{kairouz21practical}).

\begin{description}[leftmargin=0pt,topsep=0pt,itemsep=0pt]
\item [Zero-out adjacency.] We augment the input space to be $\cX_{\bot} := \cX \cup \{\bot\}$ and extend any adaptive query method $\cA$ as $\cA(g_1, \ldots, g_t; \bot) = \mathbf{0}$ for all $g_1, \ldots, g_t \in \R^d$. Datasets $\bx, \bx' \in \cX_{\bot}^n$ are said to be {\em zero-out} adjacent if there exists $i$ such that $\bx_{-i} = \bx'_{-i}$, and exactly one of $\{x_i, x_i'\}$ is in $\cX$ and the other is $\bot$. Whenever we need to specifically emphasize that $x_i \in \cX$ and $x'_i = \bot$, we will denote it as $\bx \to_z \bx'$. In this notation, $\bx \sim \bx'$ if either $\bx \to_z \bx'$ or $\bx' \to_z \bx$.
\end{description}

The privacy analysis of $\ABLQP$ with respect to zero-out adjacency is the same as that with respect to the add-remove adjacency; it is essentially replacing a record by a ``ghost'' record that makes the query method always return $\mathbf{0}$. In the rest of this paper, we only consider this zero-out adjacency.

We note that our separations where $\epsP(\delta) \ll \epsS(\eps)$, such as in \Cref{fig:eps-vs-sigma-1}, also hold under the substitution adjacency, by using ``group privacy'', namely if a mechanism satisfies $(\eps, \delta)$-$\DP$ with respect to zero-out adjacency, then it satisfies $(2\eps, \delta \cdot (1+e^{\eps}))$-$\DP$ (see, e.g., \citep{vadhan17complexity}).%

\subsection{Hockey Stick Divergence}\label{subsec:pld}
We interchangeably use the same notation (e.g., letters such as $P$) to denote both a probability distribution and its corresponding density function. For $\mu \in \R^D$ and positive semi-definite $\Sigma \in \R^{D \times D}$, we use $\cN(\mu, \Sigma)$ to denote the Gaussian distribution with mean $\mu$ and covariance $\Sigma$.
For probability densities $P$ and $Q$, we use $\alpha P + \beta Q$ to denote the weighted sum of the corresponding densities.
$P \otimes Q$ denotes the product distribution sampled as $(u, v)$ for $u \sim P$, $v \sim Q$, and, $P^{\otimes T}$ denotes the $T$-fold product distribution $P \otimes \cdots \otimes P$.
\begin{definition}\label{def:hockey-stick-divergence}
For all $\eps \in \R$, the {\em $e^\eps$-hockey stick divergence} between $P$ and $Q$ is
$D_{e^{\eps}}(P \| Q) := \sup_E \{ P(E) - e^{\eps} Q(E) \}$.
\end{definition}

It is immediate to see that $\cM$ satisfies $(\eps, \delta)$-$\DP$ iff for all adjacent $\bx \sim \bx'$, it holds that $D_{e^{\eps}}(\cM(\bx) \| \cM(\bx')) \le \delta$.

\begin{definition}[Dominating Pair~\cite{zhu22optimal}]\label{def:dominating-pair}
The pair $(P, Q)$ {\em dominates} the pair $(A, B)$ if $D_{e^{\eps}}(P \| Q) ~\ge~ D_{e^{\eps}}(A \| B)$ holds for all $\eps \in \R$.
We say that $(P, Q)$ {\em dominates} a mechanism $\cM$ if $(P, Q)$ dominates $(\cM(\bx), \cM(\bx'))$ for all adjacent $\bx \to_z \bx'$.
\end{definition}
If $(P, Q)$ dominates $\cM$, then for all $\eps \ge 0$, $\delta_{\cM}(\eps) \le \max \{ D_{e^\eps}(P\|Q), D_{e^{\eps}}(Q \| P) \}$.
We say that $(P, Q)$ {\em tightly dominates} a mechanism $\cM$ if $(P, Q)$ dominates $\cM$ and there exist adjacent datasets $\bx \to_z \bx'$ such that $D_{e^{\eps}}(P \| Q) = D_{e^{\eps}}(\cM(\bx) \| \cM(\bx'))$ holds for all $\eps \in R$ (note that this includes $\eps < 0$); in this case, $\delta_{\cM}(\eps) = \max\{ D_{e^\eps}(P\|Q), D_{e^{\eps}}(Q \| P) \}$.
Thus, tightly dominating pairs completely characterize the privacy loss of a mechanism (although they are not guaranteed to exist for all mechanisms).\footnote{\citet{zhu22optimal} define ``tightly dominating pair'' differently, in a manner that is guaranteed to exist. They additional define the notion of a ``worst-case pair'', which is a pair of adjacent datasets $\bx \sim \bx'$ such that $(\cM(\bx), \cM(\bx'))$ is a tightly dominating pair. Thus, our notion of ``tightly dominating pair'' refers precisely to the pair $(\cM(\bx), \cM(\bx'))$ for a worst-case adjacent pair $\bx$, $\bx'$. It is also worth noting that our notation for ``tightly dominating pairs'' is asymmetric as it only considers pairs $\bx \to_{z} \bx'$; the reverse setting is handled implicitly because $(P, Q)$ dominates $(\cM(\bx), \cM(\bx'))$ if and only if $(Q, P)$ dominates $(\cM(\bx'), \cM(\bx))$.}
Dominating pairs behave nicely under mechanism compositions. Namely, if $(P_1, Q_1)$ dominates $\cM_1$ and $(P_2, Q_2)$ dominates $\cM_2$, then $(P_1 \otimes P_2, Q_1 \otimes Q_2)$ dominates the (adaptively) composed mechanism $\cM_1 \circ \cM_2$.

\section{\boldmath Dominating Pairs for \texorpdfstring{$\ABLQB$}{ABLQ\_B}}\label{sec:dominating-pairs}

We discuss the dominating pairs for $\ABLQB$ for $\cB \in \{\cD, \cP, \cS\}$ that will be crucial for establishing our results.

\paragraph{\boldmath Tightly dominating pair for $\ABLQD$.} It follows from the standard analysis of the Gaussian mechanism and parallel composition that a tightly dominating pair for $\ABLQD$ is the pair $(\PD := \cN(1, \sigma^2), \QD := \cN(0, \sigma^2))$, leading to a closed-form expression for $\deltaD(\eps)$.

\begin{proposition}[Theorem 8 in \citet{balle18improving}]\label{prop:D-hockey}
For all $\eps \ge 0$, it holds that
\[
\textstyle\deltaD(\eps) = \Ncdf\prn{- \sigma \eps + \frac{1}{2\sigma}} - e^{\eps} \Ncdf\prn{- \sigma \eps - \frac{1}{2\sigma}},
\]
where $\Ncdf(\cdot)$ is the cumulative density function (CDF) of the standard normal random variable $\cN(0, 1)$.
\end{proposition}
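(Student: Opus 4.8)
The plan is to compute the hockey stick divergence $D_{e^\eps}(\PD \| \QD)$ directly for the stated tightly dominating pair $(\PD, \QD) = (\cN(1,\sigma^2), \cN(0,\sigma^2))$, and then verify that this expression matches the claimed formula (and that the reverse divergence $D_{e^\eps}(\QD\|\PD)$ does not dominate for $\eps \ge 0$, so that the $\max$ in the privacy loss curve reduces to this term). By the discussion following \Cref{def:dominating-pair}, since $(\PD,\QD)$ tightly dominates $\ABLQD$, we have $\deltaD(\eps) = \max\{D_{e^\eps}(\PD\|\QD), D_{e^\eps}(\QD\|\PD)\}$, so it suffices to evaluate these two divergences.

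\textbf{Step 1: Reduce to the supremum over halfspaces.} For two Gaussians with the same variance, the likelihood ratio $\PD(z)/\QD(z)$ is monotone in $z$, so the optimal event $E$ achieving the supremum in \Cref{def:hockey-stick-divergence} is a halfspace $\{z : \PD(z) \ge e^\eps \QD(z)\}$, i.e., $\{z \ge z^\star\}$ for the threshold $z^\star$ where the densities cross at ratio $e^\eps$. First I would compute $z^\star$: the log-likelihood ratio is $\log(\PD(z)/\QD(z)) = \frac{1}{\sigma^2}(z - \tfrac12)$, so setting this equal to $\eps$ gives $z^\star = \sigma^2\eps + \tfrac12$.

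\textbf{Step 2: Evaluate the two Gaussian tail integrals.} On this halfspace, $D_{e^\eps}(\PD\|\QD) = \Pr[\PD \ge z^\star] - e^\eps\Pr[\QD \ge z^\star]$. Standardizing, $\Pr[\PD \ge z^\star] = \Pr[\cN(0,1) \ge (z^\star - 1)/\sigma] = \Nsf((\sigma^2\eps - \tfrac12)/\sigma) = \Nsf(\sigma\eps - \tfrac{1}{2\sigma})$, and since the paper writes things via $\Ncdf$, I would use $\Nsf(u) = \Ncdf(-u)$ to get $\Ncdf(-\sigma\eps + \tfrac{1}{2\sigma})$. Similarly $\Pr[\QD \ge z^\star] = \Nsf(z^\star/\sigma) = \Ncdf(-\sigma\eps - \tfrac{1}{2\sigma})$. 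Substituting yields exactly the claimed expression. This is routine once the threshold is pinned down.

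\textbf{Step 3: Confirm the $\max$ reduces correctly.} I would check that for $\eps \ge 0$ the forward divergence $D_{e^\eps}(\PD\|\QD)$ dominates the reverse $D_{e^\eps}(\QD\|\PD)$, which follows from the symmetry of the shifted-Gaussian pair together with the fact that for $\eps \ge 0$ the relevant tail contribution is maximized in the direction of the mean shift; this ensures $\deltaD(\eps)$ equals the single expression rather than the reverse. \textbf{The main obstacle} is essentially bookkeeping rather than any deep difficulty: the only genuinely substantive content is establishing that $(\PD,\QD)$ is a \emph{tightly} dominating pair, which the paper asserts follows from the standard Gaussian mechanism analysis plus parallel composition over the $T$ disjoint deterministic batches (each coordinate of the differing record appears in exactly one batch, so the worst case collapses to a single one-dimensional shifted Gaussian). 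Since \Cref{prop:D-hockey} is attributed to \citet{balle18improving} and treats this reduction as given, the proof here is just the halfspace computation above; care must only be taken with the sign conventions relating $\Nsf$ and $\Ncdf$ so the arguments of $\Ncdf$ come out with the correct signs.
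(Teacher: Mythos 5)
Your proposal is correct and is essentially the argument the paper implicitly relies on: the paper gives no proof of its own for this proposition, citing Theorem 8 of Balle \& Wang (2018) for the formula (and asserting the tight-domination reduction via parallel composition in the surrounding text), and your monotone-likelihood-ratio/halfspace computation with threshold $z^\star = \sigma^2\eps + \tfrac12$ is precisely the standard derivation of that cited result. One minor simplification for your Step 3: the reflection $z \mapsto 1 - z$ maps $(\PD, \QD)$ to $(\QD, \PD)$, so $D_{e^\eps}(\PD \| \QD) = D_{e^\eps}(\QD \| \PD)$ holds exactly and the reduction of the max is immediate, with no tail-comparison argument needed.
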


\paragraph{\boldmath Tightly dominating pair of $\ABLQP$.} \citet{zhu22optimal} showed\footnote{This was implicit in prior work, e.g., \cite{koskela2020computing}.} that the tightly dominating pair for a single step of $\ABLQP$, a Poisson sub-sampled Gaussian mechanism, is given by the pair $(A = (1 - q) \cN(0, \sigma^2) + q \cN(1, \sigma^2), B = \cN(0, \sigma^2))$, where $q$ is the sub-sampling probability of each record, namely $q = b/n$, and in the case where $n = b \cdot T$, we have $q = 1/T$. Since $\ABLQP$ is a $T$-fold composition of this Poisson subsampled Gaussian mechanism, it follows that the tightly dominating pair for $\ABLQP$ is $(\PP := A^{\otimes T}, \QP := B^{\otimes T})$.

The hockey stick divergence $D_{e^{\eps}}(\PP \| \QP)$ does not have a closed-form expression, but there are privacy accountants based on the methods of R\'enyi DP (RDP) ~\cite{mironov17renyi} as well as {\em privacy loss distributions (PLD)} ~\cite{meiser2018tight, sommer2019privacy}, the latter providing numerically accurate algorithms~\cite{koskela2020computing,gopi21numerical,ghazi22faster,doroshenko22connect},
and have been the basis of multiple open-source implementations from both industry and academia including \citep{DPBayes,GoogleDP,MicrosoftDP}.
While R\'enyi-DP-based accounting provides an upper bound on $\max\{D_{e^{\eps}}(\PP \| \QP), D_{e^{\eps}}(\QP \| \PP)\}$, the PLD-based accounting implementations can provide upper and lower bounds on $\max\{D_{e^{\eps}}(\PP \| \QP), D_{e^{\eps}}(\QP \| \PP)\}$ to high accuracy, as controlled by a certain discretization parameter.

\paragraph{\boldmath Tightly dominating pair for $\ABLQS$?}
It is not clear which adjacent pair would correspond to a tightly dominating pair for $\ABLQS$, and moreover, it is even a priori unclear if one even exists. However, in order to prove {\em lower bounds} on the privacy parameters, it suffices to consider a specific instantiation of the adaptive query method $\cA$, and a particular adjacent pair $\bx \sim \bx'$. In particular, we instantiate the query method $\cA$ as follows.

Consider the input space $\cX = [-1, 1]$, and assume that the query method $\cA$ is non-adaptive, and always produces the query $\psi_t(x) = x$. We consider the adjacent datasets:
\begin{itemize}[leftmargin=*,topsep=0pt,itemsep=0pt]
\item $\bx = (x_1=-1, \ldots, x_{n-1}=-1, x_n=1)$, and
\item $\bx' = (x_1=-1, \ldots, x_{n-1}=-1, x_n=\bot)$.
\end{itemize}

In this case, when the differing record falls in batch $t$, then output of the mechanism on all batches $t' \ne t$ is centered at $-b$, and is centered at $-b+2$ (on input $\bx$) or at $-b+1$ (on input $\bx'$) on batch $t$. Thus it follows that the distributions $A = \ABLQS(\bx)$ and $B = \ABLQS(\bx')$ are given as:
\begin{align*}
A &~=~\textstyle \sum_{t=1}^T \frac{1}{T} \cdot \cN(-b \cdot \mathbf{1} + 2 e_t, \sigma^2 I), \\
B &~=~\textstyle \sum_{t=1}^T \frac{1}{T} \cdot \cN(-b \cdot \mathbf{1} + e_t, \sigma^2 I),
\end{align*}
where $\mathbf{1}$ denotes the all-$1$'s vector in $\R^T$ and $e_t$ denotes the $t$th standard basis vector in $\R^T$.
Shifting the distributions by $b \cdot \mathbf{1}$ does not change the hockey stick divergence $D_{e^{\eps}}(A \| B)$, hence we might as well consider the pair
\begin{align}
\PS &:=~\textstyle \sum_{t=1}^T \frac{1}{T} \cdot \cN(2 e_t, \sigma^2 I)\label{eq:shuffle-upper},\\
\QS &:=~\textstyle \sum_{t=1}^T \frac{1}{T} \cdot \cN(e_t, \sigma^2 I).\label{eq:shuffle-lower}
\end{align}
Thus, $\deltaS(\eps) \ge \max\{D_{e^{\eps}}(\PS \| \QS), D_{e^{\eps}}(\QS \| \PS)\}$.
We conjecture that this pair is in fact tightly dominating for $\ABLQS$ for all instantiations of query methods $\cA$ (including adaptive ones). We elaborate more in \Cref{subsec:intuition}.%
\begin{conjecture}\label{conj:shuffle-dominating}
The pair $(\PS, \QS)$ tightly dominates $\ABLQS$ for all adaptive query methods $\cA$.
\end{conjecture}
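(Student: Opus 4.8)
The plan is to split the conjecture into two parts. \emph{Tightness} is immediate: the non-adaptive query method $\psi_t(x)=x$ together with the explicit adjacent pair $\bx \to_z \bx'$ displayed just above already yields output distributions equal to $\PS$ and $\QS$ up to the shift by $b \cdot \mathbf{1}$, which does not affect hockey stick divergence; hence $D_{e^\eps}(\ABLQS(\bx)\|\ABLQS(\bx')) = D_{e^\eps}(\PS\|\QS)$ for every $\eps$, and the reverse divergences agree as well. All the content is in \emph{domination}: for every adaptive $\cA$ and every $\bx \to_z \bx'$ one must show
\[
D_{e^\eps}(\ABLQS(\bx)\|\ABLQS(\bx')) \le D_{e^\eps}(\PS\|\QS)
\]
for all $\eps \in \R$, together with the symmetric inequality $D_{e^\eps}(\ABLQS(\bx')\|\ABLQS(\bx)) \le D_{e^\eps}(\QS\|\PS)$ that handles the $\eps<0$ portion of the dominating-pair order.

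As a first step I would record the structural simplifications available once we condition on the permutation $\pi$, which has the same law under $\bx$ and $\bx'$ since $\cS$ ignores the data. Writing $\tau=\tau(\pi)$ for the batch receiving the differing record, the prefix $g_1,\ldots,g_{\tau-1}$ can be coupled identically (those batches are non-differing), the response $g_\tau$ differs only by the additive shift $\psi_\tau(x_i;g_{<\tau})$ of norm at most $1$, and, conditioned on $g_{\le\tau}$, the suffix $g_{\tau+1},\ldots,g_T$ is generated by a data-independent channel, so appending it neither raises nor lowers the divergence by the two-sided data-processing property. Rotational invariance of the Gaussian divergence then replaces the $d$-dimensional shift by its scalar norm. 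The catch is that this chain of reductions conditions on $\tau$: if one stops here and averages over $\pi$, the bound collapses to the amplification-free deterministic curve $\deltaD$ of \Cref{prop:D-hockey}, because revealing $\tau$ is exactly what destroys amplification.

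Consequently the crux --- and the step I expect to be the true obstacle --- is to retain the adversary's uncertainty about $\tau$ while still controlling the adaptive and baseline structure. There is a genuine tension here: the amplification of $\cS$ is powered by not knowing $\tau$, yet the shuffle batches are not independent, so no composition theorem for dominating pairs is available, and the marginal likelihood ratio is an unwieldy sum over permutations. Intuitively the canonical instance should be extremal for two reinforcing reasons: it makes $\|\psi_\tau(x_i)\|=1$ (maximal differing signal), and it aligns all non-differing gradients so that the one-record ``deficit'' in the differing batch --- the quantity that lets an adversary localize $\tau$, and hence the quantity that \emph{suppresses} amplification --- is as large and as detectable as possible, producing the common $e_\tau$ offset shared by $\PS$ and $\QS$.

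To turn this into a proof I would attempt a symmetrization/majorization argument with three ingredients: (i) misaligning or spreading the non-differing contributions only makes $\tau$ harder to localize and so can only decrease the location-mixture divergence; (ii) any differing signal of norm below $1$, or not aligned with the deficit, is dominated via monotonicity of $D_{e^\eps}(\cN(\mu,\sigma^2 I)\|\cN(0,\sigma^2 I))$ in $\|\mu\|$; and (iii) a Schur-convexity/exchangeability statement for the hockey stick divergence of a uniform mixture of translates of a common Gaussian, showing that equalizing the non-differing baselines to the canonical deterministic means is worst-case. Proving (iii) \emph{uniformly in $\eps$} --- in the dominating-pair order rather than at a single threshold --- while simultaneously disentangling the location uncertainty from the adaptive and baseline randomness, is precisely where I expect the argument to stall, which is consistent with the statement being posed only as a conjecture.
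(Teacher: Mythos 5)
Your proposal is not a proof, and you say as much yourself; but it is worth being precise about what is and is not established, because the paper does not prove this statement either --- it is posed as \Cref{conj:shuffle-dominating} exactly because the domination half is open (the paper offers only intuition in \Cref{subsec:intuition} and lists establishing the conjecture as a future direction in \Cref{sec:conclusion}). The parts you do establish are correct: tightness is witnessed by the canonical non-adaptive instance $\psi_t(x)=x$ with the displayed pair $\bx \to_z \bx'$, whose output distributions are translates of $\PS$ and $\QS$ by $b \cdot \mathbf{1}$, so the divergences in both directions coincide with those of $(\PS, \QS)$ for every $\eps$; and your conditional reductions (identical prefix law given the permutation $\pi$, a unit-norm adaptive shift in the differing batch $\tau$, and a common post-processing channel for the suffix) are sound, as is your observation that averaging these conditional bounds over $\pi$ can only recover $\deltaD(\eps)$, since revealing $\tau$ is precisely what destroys amplification.

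The genuine gap is your ingredient (iii), and to a lesser extent (i): a Schur-convexity/majorization statement for the hockey stick divergence of uniform mixtures of Gaussian translates, holding simultaneously for all $\eps \in \R$, identifying the fully aligned configuration as extremal. Nothing in your proposal reduces this to a known fact: the monotonicity of $D_{e^\eps}\prn{\cN(\mu,\sigma^2 I)\,\|\,\cN(0,\sigma^2 I)}$ in $\|\mu\|$ (your (ii)) is true but operates on a single Gaussian pair, not on the location mixtures that arise here, and no composition theorem for dominating pairs applies because the shuffle batches are dependent. This missing step is not a technicality --- it is the entire content of the conjecture. Note also that your extremality intuition, that the non-differing records should align so as to make the one-record deficit in batch $\tau$ maximally detectable, is essentially the same as the paper's own reasoning in \Cref{subsec:intuition} (the worst case should have $\psi_t(x_j) = -\psi_t(x_i)$ for all $j \ne i$, in which case domination by $(\PS, \QS)$ is easy to show); so your plan is best read as a careful articulation of the open problem, together with a correct identification of where any proof must do real work, rather than a solution to it.
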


The results in this paper do \emph{not} rely on this conjecture being true, as we only use the dominating pair $(\PS, \QS)$ to establish {\em lower bounds} on $\deltaS(\cdot)$.

\section{Privacy Loss Comparisons}\label{sec:separations}

\subsection{\boldmath \texorpdfstring{$\ABLQD$}{ABLQ\_D} vs \texorpdfstring{$\ABLQS$}{ABLQ\_S}}\label{sec:D-vs-S}

We first note that $\ABLQS$ enjoys stronger privacy guarantees than $\ABLQD$.
\begin{theorem}\label{thm:D-vs-S}
For all $\sigma, \eps \ge 0$ and $T \ge 1$: $\deltaS(\eps) \le \deltaD(\eps)$.
\end{theorem}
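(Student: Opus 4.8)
The plan is to exploit the fact that the shuffle batch sampler $\cS_{b,T}$ is nothing but the deterministic batch sampler $\cD_{b,T}$ applied after relabelling the dataset by a uniformly random permutation. For a permutation $\pi$ of $[n]$, write $\pi \cdot \bx$ for the dataset whose $j$-th record is $x_{\pi(j)}$. First I would verify the distributional identity that, for any fixed adaptive query method $\cA$,
\[
\ABLQS(\bx) ~\overset{d}{=}~ \tfrac{1}{n!}\sum_{\pi} \ABLQD(\pi \cdot \bx),
\]
where the sum ranges over all $n!$ permutations and every term reuses the same $\cA$. This holds because, conditioning on the permutation $\pi$ drawn at the start of $\cS_{b,T}$, the batch at step $t$ consists of the records $x_{\pi((t-1)b+1)}, \dots, x_{\pi(tb)}$, which is exactly the $t$-th deterministic batch of the relabelled dataset $\pi \cdot \bx$; since the responses $g_1, \dots, g_{t-1}$ feeding into $\psi_t$ are produced identically in both processes, the entire trajectory has the same law.

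Next I would observe that adjacency is preserved under relabelling: if $\bx \to_z \bx'$ differ at coordinate $i$, then $\pi \cdot \bx$ and $\pi \cdot \bx'$ differ only at coordinate $\pi^{-1}(i)$, with the record in $\cX$ on the $\bx$ side and $\bot$ on the $\bx'$ side, so $\pi \cdot \bx \to_z \pi \cdot \bx'$ for every $\pi$. Hence, directly from the definition of $\deltaD$, for each fixed $\pi$,
\[
D_{e^{\eps}}\bigl(\ABLQD(\pi \cdot \bx) \,\big\|\, \ABLQD(\pi \cdot \bx')\bigr) ~\le~ \deltaD(\eps).
\]

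The final step is to transfer this per-permutation bound to the mixtures. Since $D_{e^{\eps}}(P \| Q) = \sup_E \{P(E) - e^{\eps} Q(E)\}$ is a supremum of functionals that are jointly linear in $(P,Q)$, it is jointly convex in its two arguments (it is the $f$-divergence with $f(x) = (x - e^{\eps})_+$). Applying joint convexity to the two uniform mixtures and then invoking the per-permutation bound gives
\[
D_{e^{\eps}}\bigl(\ABLQS(\bx) \,\big\|\, \ABLQS(\bx')\bigr) ~\le~ \tfrac{1}{n!}\sum_{\pi} D_{e^{\eps}}\bigl(\ABLQD(\pi \cdot \bx) \,\big\|\, \ABLQD(\pi \cdot \bx')\bigr) ~\le~ \deltaD(\eps).
\]
Because $\deltaD(\eps)$ bounds the divergence for every ordered adjacent pair, the reverse orientation of the original pair is handled by the identical argument with the roles swapped. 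Taking the supremum over all adjacent $\bx \sim \bx'$ and all query methods $\cA$ then yields $\deltaS(\eps) \le \deltaD(\eps)$. I expect the only real subtlety—rather than any hard computation—to be the careful justification of the distributional identity in the presence of adaptivity, namely that drawing the permutation first turns $\ABLQS(\bx)$ into an honest mixture of $\ABLQD$ runs on the relabelled datasets; once that is pinned down, joint convexity of the hockey stick divergence does the rest, and the theorem is essentially a pre-processing argument.
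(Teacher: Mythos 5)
Your proposal is correct and follows essentially the same route as the paper: the paper likewise observes that $\ABLQS$ is exactly $\ABLQD$ applied to a randomly permuted dataset (its \Cref{lem:some-amplification-by-shuffling}), and concludes via joint convexity of the hockey stick divergence over the uniform mixture of permutations (its \Cref{lem:joint-convexity}). Your additional care with adaptivity of $\cA$ and with the directedness of the zero-out adjacency makes explicit what the paper leaves implicit, but the argument is the same.
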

This follows from a standard technique that shuffling cannot degrade the privacy guarantee satisfied by a mechanism. For completeness, we provide a proof in \Cref{apx:D-vs-S}.%

\subsection{\boldmath \texorpdfstring{$\ABLQD$}{ABLQ\_D} vs \texorpdfstring{$\ABLQP$}{ABLQ\_P}}\label{sec:D-vs-P}

We show that $\ABLQD$ and $\ABLQP$ have incomparable privacy loss. In particular, we show the following.
\begin{theorem}\label{thm:D-vs-P-separation}
For all $\sigma > 0$ and $T > 1$, there exist $\eps_0, \eps_1 \ge 0$ such that,
\begin{enumerate}[label=(\alph*),topsep=0pt,itemsep=0pt]
\item $\forall \eps \in [0, \eps_0)$, it holds that $\deltaD(\eps) > \deltaP(\eps)$, and
\item $\forall \eps > \eps_1$, it holds that $\deltaD(\eps) < \deltaP(\eps)$.
\end{enumerate}
\end{theorem}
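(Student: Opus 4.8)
The plan is to work entirely with the tightly dominating pairs recorded in \Cref{sec:dominating-pairs}: $(\PD,\QD)=(\cN(1,\sigma^2),\cN(0,\sigma^2))$ for $\ABLQD$, and $(\PP,\QP)=(A^{\otimes T},B^{\otimes T})$ for $\ABLQP$, where $A=(1-q)\cN(0,\sigma^2)+q\,\cN(1,\sigma^2)$, $B=\cN(0,\sigma^2)$ and $q=1/T$. Since both pairs are \emph{tightly} dominating, $\deltaD(\eps)=\max\{D_{e^\eps}(\PD\|\QD),D_{e^\eps}(\QD\|\PD)\}$ and likewise for $\deltaP$; at $\eps=0$ each reduces to the total variation distance $\mathrm{TV}$ of the corresponding pair. \textbf{Part (a).} The first step is the strict inequality at $\eps=0$, i.e.\ $\mathrm{TV}(\PP,\QP)<\mathrm{TV}(\PD,\QD)$. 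Since $A-B=q(\cN(1,\sigma^2)-B)$ as densities, $\mathrm{TV}(A,B)=q\cdot\mathrm{TV}(\cN(1,\sigma^2),\cN(0,\sigma^2))=\deltaD(0)/T$. Tensorizing via the maximal coupling on each of the $T$ coordinates, samples from $A^{\otimes T}$ and $B^{\otimes T}$ agree with probability at least $(1-\mathrm{TV}(A,B))^T$, so $\mathrm{TV}(\PP,\QP)\le 1-(1-\deltaD(0)/T)^T$. The elementary inequality $1-(1-x/T)^T<x$ for $x\in(0,1]$ and integer $T\ge2$ (its $x$-derivative is $1-(1-x/T)^{T-1}>0$, with value $0$ at $x=0$), applied with $x=\deltaD(0)\in(0,1)$, gives $\deltaP(0)<\deltaD(0)$. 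Finally $\eps\mapsto D_{e^\eps}(P\|Q)=\E_Q[(P/Q-e^\eps)_+]$ is continuous, so $\deltaD-\deltaP$ is continuous and strictly positive at $\eps=0$, hence positive on some $[0,\eps_0)$, proving (a).

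\textbf{Part (b).} Here a threshold event on a single coordinate only recovers the single-step subsampled-Gaussian divergence $D_{e^\eps}(A\|B)$, which in fact decays \emph{faster} than $\deltaD(\eps)$; the product structure must be exploited through the rare event that the differing record lands in \emph{all} $T$ batches. I would bound $\deltaP(\eps)\ge D_{e^\eps}(\PP\|\QP)\ge \PP(E)-e^\eps\QP(E)$ for the halfspace $E=\{z\in\R^T:\sum_i z_i\ge\tau\}$. Under $\QP$, $\sum_i z_i\sim\cN(0,T\sigma^2)$, so $\QP(E)=\Ncdf(-\tau/(\sqrt T\sigma))$. Under $\PP$, writing $K\sim\mathrm{Bin}(T,q)$ for the number of shifted coordinates and keeping only the $K=T$ term, $\PP(E)\ge q^T\,\Pr[\cN(T,T\sigma^2)\ge\tau]$: conditioned on the record being in every batch, $\sum_i z_i$ is Gaussian of mean $T$ and variance $T\sigma^2$, i.e.\ effectively a Gaussian mechanism of sensitivity $T$ and noise $\sqrt T\sigma$. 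Using the monotone tail-ratio bound $\Pr[\cN(0,T\sigma^2)\ge\tau-T]\ge e^{(2\tau-T)/(2\sigma^2)}\Pr[\cN(0,T\sigma^2)\ge\tau]$ and choosing $\tau=\sigma^2\eps+\sigma^2T\log T+O(1)$ so that the $\PP$-term exceeds $2e^\eps\QP(E)$, one gets $\PP(E)-e^\eps\QP(E)\ge e^\eps\Ncdf(-\tau/(\sqrt T\sigma))$, and hence
\[
\deltaP(\eps) ~\gtrsim~ \exp\!\prn{-\tfrac{\sigma^2}{2T}\,\eps^2 + O(\eps)}.
\]
By contrast, \Cref{prop:D-hockey} together with the standard Gaussian tail expansion gives $\deltaD(\eps)=\exp(-\tfrac{\sigma^2}{2}\eps^2+O(\eps))$. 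Since $T>1$ forces $\tfrac{1}{2T}<\tfrac12$, the quadratic exponent of the lower bound on $\deltaP(\eps)$ strictly dominates that of $\deltaD(\eps)$, so $\deltaP(\eps)/\deltaD(\eps)\to\infty$; in particular $\deltaP(\eps)>\deltaD(\eps)$ for all $\eps$ beyond some $\eps_1$, which is (b).

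The main obstacle is (b): finding an event whose divergence decays slower than $\deltaD(\eps)$. The nontrivial insight is that the natural one-coordinate (single-batch) events fail, and one must instead isolate the measure-$q^T$ event ``record in all $T$ batches,'' which behaves like a Gaussian mechanism of sensitivity $T$ and noise $\sqrt T\sigma$ and thus has privacy decaying as $\exp(-\sigma^2\eps^2/(2T))$---strictly slower, for $T>1$, than the deterministic rate $\exp(-\sigma^2\eps^2/2)$. The remaining work is only to turn the tail estimates into honest (rather than merely asymptotic) inequalities, via the tail-ratio bound above, while tracking polynomial prefactors that are immaterial to the quadratic-exponent comparison.
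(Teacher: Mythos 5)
Your proposal is correct and takes essentially the same route as the paper's proof: part (a) is the same total-variation argument---$D_1(A\|B)=\tfrac{1}{T}D_1(\PD\|\QD)$, the coupling bound $D_1(A^{\otimes T}\|B^{\otimes T})\le 1-(1-D_1(A\|B))^T$ (the paper's \Cref{prop:union-bound}) with strictness for $T>1$, then continuity of $D_{e^\eps}$ in $\eps$---and part (b) uses the same halfspace $E=\{w\in\R^T:\sum_t w_t\ge \sigma^2(\eps+T\log T+\log 2)+T/2\}$ and the same $\exp(-\sigma^2\eps^2/(2T))$-versus-$\exp(-\sigma^2\eps^2/2)$ tail comparison against \Cref{prop:D-hockey}. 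The only difference is bookkeeping in (b): you lower bound $\PP(E)$ by keeping the weight-$T^{-T}$ ``all-$T$-batches'' mixture component and applying a likelihood-ratio bound to the sum statistic, whereas the paper proves the equivalent pointwise bound $\log\tfrac{\PP(w)}{\QP(w)}\ge \tfrac{\sum_t w_t}{\sigma^2}-T\log T-\tfrac{T}{2\sigma^2}$ on $E$ and then bounds $\PP(E)\ge N_{\mathbf{0}}(E)=\QP(E)$; both routes amount to $\PP(E)\ge 2e^{\eps}\QP(E)$ combined with the same Gaussian tail estimate $\Ncdf\prn{-\tfrac{\eps\sigma}{\sqrt{T}}-O(1)}$.
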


We defer the detailed proof to \Cref{apx:D-vs-P}, and provide a proof sketch here. Part (a) is shown by first establishing that the total variation distance (corresponds to $D_1(\cdot \| \cdot)$) between $\PD$ and $\QD$ is strictly larger than the total variation distance between $\PP$ and $\QP$ when $T > 1$ and $\sigma > 0$. This implies that, $\deltaD(0) > \deltaP(0)$. By using the continuity of $D_{e^\eps}(\cdot \| \cdot)$ in $\eps$, we conclude the same for all $\eps < \eps_0$.

For part (b), we construct an explicit set $E$ such that $\PP(E) - e^{\eps} \QP(E) > \deltaD(\eps)$. In particular, we choose a halfspace $E := \left\{ w \in \R^T \big | \sum_i w_i > (\eps + \log 2 + T \log T) \sigma^2 + {\frac{T}{2}}\right\}$ and show that $\PP(E) - e^{\eps} \QP(E)$ is at least $\frac12 \Ncdf\prn{- \frac{\eps\sigma}{\sqrt{T}} - \frac{(T\log T + \log 2)\sigma}{\sqrt{T}} - \frac{\sqrt{T}}{2\sigma}}$.
For large $\eps$, the dominant term is $-\eps \sigma / \sqrt{T}$.
On other hand, $\deltaD(\eps)$ is at most $\Ncdf(-\eps\sigma + \frac{1}{2\sigma})$ (from \Cref{prop:D-hockey}), which has the dominant term $-\eps \sigma$.
Since $-\eps\sigma/\sqrt{T}$ decays slower than $-\eps\sigma$, we get that for sufficiently large $\eps_1$, it holds that $\deltaD(\eps) < \deltaP(\eps)$ for all $\eps > \eps_1$.

Even though \Cref{thm:D-vs-P-separation} was proved for some values of $\eps_0$ and $\eps_1$, we conjecture that it holds for $\eps_0 = \eps_1$.
\begin{conjecture}\label{conj:D-vs-P-single-threshold}
\Cref{thm:D-vs-P-separation} holds for $\eps_0 = \eps_1$.
\end{conjecture}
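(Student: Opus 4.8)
The plan is to upgrade \Cref{thm:D-vs-P-separation} by showing that the function $g(\eps) := \deltaD(\eps) - \deltaP(\eps)$ changes sign \emph{exactly once} on $[0,\infty)$, from positive to negative; the unique crossing point then serves as the common threshold $\eps_0 = \eps_1$. \Cref{thm:D-vs-P-separation}(a) already gives $g(\eps) > 0$ for small $\eps$ and part (b) gives $g(\eps) < 0$ for large $\eps$, and $g$ is real-analytic on $(0,\infty)$ (the $\deltaD$ part is the explicit error-function expression of \Cref{prop:D-hockey}, and $\deltaP$ is smooth in the interior of its range), so it has isolated zeros and a finite, necessarily \emph{odd}, number of sign changes. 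Hence it suffices to prove the \emph{upper} bound that $g$ has at most two sign changes; parity then forces exactly one.

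To bound the sign changes I would pass to privacy loss distributions. Writing $\omega_\cD$ and $\omega_\cP$ for the PLDs of the dominating pairs $(\PD,\QD)$ and $(\PP,\QP)$, one has $\deltaB(\eps) = \int_\eps^\infty (1 - e^{\eps-\ell})\,\omega_B(\ell)\,d\ell$ for $B \in \{\cD,\cP\}$, so $g(\eps) = \int_\eps^\infty (1-e^{\eps-\ell})\,\mu(\ell)\,d\ell$ with $\mu := \omega_\cD - \omega_\cP$. Here $\omega_\cD = \cN\!\prn{\tfrac{1}{2\sigma^2}, \tfrac{1}{\sigma^2}}$ is a single Gaussian, while $\omega_\cP$ is the $T$-fold convolution of the single-step subsampled-Gaussian PLD. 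The integral transform $\mu \mapsto g$ has kernel $(1-e^{\eps-\ell})_+$, which I would argue is sign-regular (totally positive of order two on the region where it is nonzero); the variation-diminishing property of such kernels then yields $S^-(g) \le S^-(\mu)$, where $S^-$ counts sign changes. This reduces the whole problem to showing $S^-(\omega_\cD - \omega_\cP) \le 2$.

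It is worth noting that the target count is in fact exactly two: since both $\omega_\cD,\omega_\cP$ are probability densities, $\int \mu = 0$, and since $\int e^{-\ell}\omega_B(\ell)\,d\ell$ equals the total $Q_B$-mass $=1$ for both, we also have $\int e^{-\ell}\mu\,d\ell = 0$. A signed density with a single sign change cannot satisfy both constraints (the factor $e^{-\ell}$ overweights whichever tail carries the ``earlier'' sign), so $S^-(\mu) \ge 2$. This matches the ``positive in the bulk, negative in the tails'' shape one expects when comparing the concentrated Gaussian $\omega_\cD$ against the more heavy-tailed $\omega_\cP$. The clean way to get the matching upper bound $S^-(\omega_\cD - \omega_\cP) \le 2$ would be to show $\omega_\cP$ is log-concave: then $\log(\omega_\cD/\omega_\cP)$ is concave and vanishes at most twice, and log-concavity of $\omega_\cP$ would follow from log-concavity of the single-step PLD (convolution preserves it).

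The main obstacle is precisely this last step. The single-step subsampled-Gaussian PLD is the pushforward of the Gaussian \emph{mixture} $(1-q)\cN(0,\sigma^2) + q\,\cN(1,\sigma^2)$ under the privacy-loss map, and mixtures of separated Gaussians are generally not log-concave, so the convenient log-concavity route likely fails and a direct analysis of where $\omega_\cD(\ell) = \omega_\cP(\ell)$ is needed instead. An alternative that avoids the global shape of $\omega_\cP$ is to establish single crossing through derivatives: using $\deltaB'(\eps) = -e^\eps\,Q_B(E^B_\eps)$ (with $E^B_\eps$ the privacy-loss super-level set), the requirement that $\deltaP - \deltaD$ cross zero only \emph{upward} reduces, via the equality $\deltaD(\eps^\ast)=\deltaP(\eps^\ast)$ at a crossing, to the tail inequality $\Pr_{\omega_\cD}[\ell > \eps^\ast] > \Pr_{\omega_\cP}[\ell > \eps^\ast]$. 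But controlling the upper tail of the convolution $\omega_\cP$ at the a priori unknown crossing point $\eps^\ast$ is the same essential difficulty, which is presumably why the statement is left as a conjecture.
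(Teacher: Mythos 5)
First, be clear about what you are up against: the statement is a \emph{conjecture}, and the paper does not prove it. The authors explicitly state that they do not rely on it and support it only with numerical evidence (the plot for $\sigma = 0.3$, $T = 10$ computed via the \texttt{dp\_accounting} library). So there is no paper proof to compare against; the only question is whether your argument closes the conjecture, and it does not. By your own admission, the decisive step --- showing $S^-(\omega_{\cD} - \omega_{\cP}) \le 2$ for the difference of the two privacy loss densities --- is never established, and your fallback route (a tail inequality at the a priori unknown crossing point $\eps^\ast$) is likewise left open. What you have is a plausible research program with a correctly identified reduction, not a proof.

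Beyond the gap you acknowledge, two steps would fail as written. First, the log-concavity route is logically backwards: if $\omega_{\cP}$ is log-concave, then $\log(\omega_{\cD}/\omega_{\cP}) = \log \omega_{\cD} - \log \omega_{\cP}$ is a \emph{difference} of concave functions, which need not be concave and can vanish arbitrarily often (e.g.\ $\log \omega_{\cP}(\ell) = -\ell^2/2 - \epsilon \sin \ell$ is concave for $\epsilon < 1$, yet its log-ratio against a standard Gaussian oscillates through zero infinitely often). What you actually need is a uniform curvature comparison, $(\log \omega_{\cP})'' \ge (\log \omega_{\cD})'' \equiv -\sigma^2$ pointwise (concave log-ratio) or the reverse inequality (convex log-ratio); plain log-concavity of $\omega_{\cP}$ gives neither, so this route is broken independently of whether $\omega_{\cP}$ is log-concave. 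Second, the variation-diminishing step needs more than you provide: the VD theorem for a sign-regular kernel of order $r$ yields $S^-(Kf) \le S^-(f)$ only when $S^-(f) \le r-1$, and since you must push $S^-(\mu) = 2$ through the kernel, you need sign regularity of order at least $3$ for the hockey-stick kernel $(1 - e^{\eps - \ell})_+$; log-concavity of $u \mapsto (1 - e^{-u})_+$ only delivers order $2$, and the order-$3$ property is asserted nowhere. Finally, even granting $S^-(g) = 1$, the conjecture demands strict inequalities on \emph{both} sides of a single threshold, so you must also exclude even-order (touching) zeros of $g$, which sign-change counting cannot see; analyticity alone does not rule these out. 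None of these holes is obviously fatal to the program, but each is a genuine missing piece --- consistent with the authors' choice to leave the statement as a conjecture backed only by numerics.
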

We do \emph{not} rely on this conjecture being true in the rest of this paper. We provide a numerical example that validates \Cref{thm:D-vs-P-separation} and provides evidence for \Cref{conj:D-vs-P-single-threshold}. In \Cref{fig:D-vs-P-separation}, for $\sigma = 0.3$ and $T = 10$, we plot the numerically computed $\deltaD(\eps)$ (using \Cref{prop:D-hockey}), as well as lower and upper bounds on $\deltaP(\eps)$, computed using the open source \texttt{dp\_accounting} library~\cite{GoogleDP}.

\begin{figure}
\centering
\includegraphics[width=8cm]{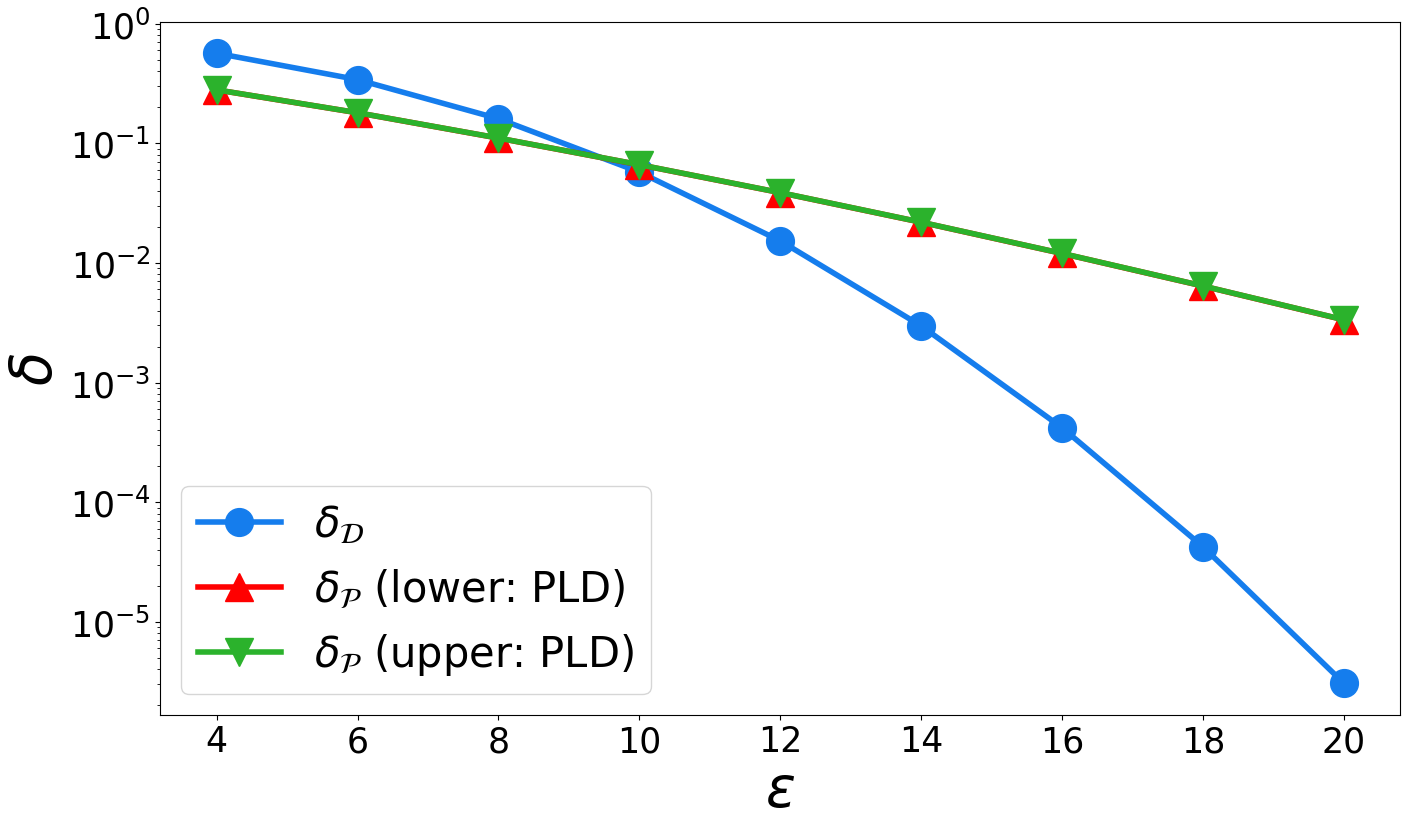}
\caption{$\deltaD(\eps)$ and $\deltaP(\eps)$ for $\sigma=0.3$ and $T = 10$.}
\label{fig:D-vs-P-separation}
\end{figure}

\subsection{\boldmath \texorpdfstring{$\ABLQP$}{ABLQ\_P} vs \texorpdfstring{$\ABLQS$}{ABLQ\_S}}\label{sec:P-vs-S}

From \Cref{thm:D-vs-S,thm:D-vs-P-separation}, it follows that there exists an $\eps_1$ such that for all $\eps > \eps_1$, it holds that $\deltaS(\eps) \le \deltaD(\eps) < \deltaP(\eps)$. On the other hand, while we know that $\deltaD(\eps) > \deltaP(\eps)$ for sufficiently small $\eps$, this does not imply anything about the relationship between $\deltaS(\eps)$ and $\deltaP(\eps)$ for small $\eps$.

We demonstrate simple numerical settings where $\deltaS(\eps)$ is significantly larger than $\deltaP(\eps)$. We prove lower bounds on $\deltaS(\eps)$ by constructing specific sets $E$ and using the fact that $\deltaS(\eps) \ge \PS(E) - e^{\eps} \QS(E)$.

In particular, we consider sets $E_C$ parameterized by $C$ of the form $\{ w \in \R^T : \max_{t} w_t \ge C \}$; note that $E_C$ is the complement of $T$-fold Cartesian product of the set $(-\infty, C)$.
For a single Gaussian distribution $D = \cN(\mu, \sigma^2 I)$, we can compute the probability mass of $E_C$ under measure $D$ as:
\begin{align*}
D(E_C)
&\textstyle~=~ 1 - D(\R^T \smallsetminus E_C) \\
&\textstyle~=~ 1 - \prod_{t=1}^T \Pr_{x \sim \cN(\mu_t, \sigma^2)}[x < C]\\
&\textstyle~=~ 1 - \prod_{t=1}^T \Ncdf\prn{\frac{C - \mu_t}\sigma}.
\end{align*}
In particular, when $\mu$ is $\alpha \cdot e_t$ for any standard basis vector $e_t$, we have
$D(E_C) = 1 - \Ncdf\prn{\frac{C - \alpha}{\sigma}} \cdot \Ncdf\prn{\frac C \sigma}^{T-1}$. Thus, we have that $\PS(E_C)$ is
\begin{align*}
\PS(E_C) &~=~\textstyle \sum_{t=1}^T \frac1T D_t(E_C) \quad \text{for } D_t = N(2e_t, \sigma^2 I)\\
&\textstyle~=~ 1 - \Ncdf\prn{\frac{C - 2}{\sigma}} \cdot \Ncdf\prn{\frac C \sigma}^{T-1}.
\end{align*}
Similarly, we have $\QS(E_C) = 1 - \Ncdf\prn{\frac{C - 1}{\sigma}} \cdot \Ncdf\prn{\frac C \sigma}^{T-1}$.

Thus, we use the following lower bound:
\begin{align}
    \deltaS(\eps) ~\ge~ \max_{C \in \cC} \PS(E_C) - e^{\eps} \QS(E_C)\label{eq:deltaS-lower-bound}
\end{align}
for any suitable set $\cC$ that can be enumerated over. In our experiments described below, we set $\cC$ to be the set of all values of $C$ ranging from $0$ to $100$ in increments of $0.01$.

\begin{figure}
\centering
\includegraphics[width=8cm]{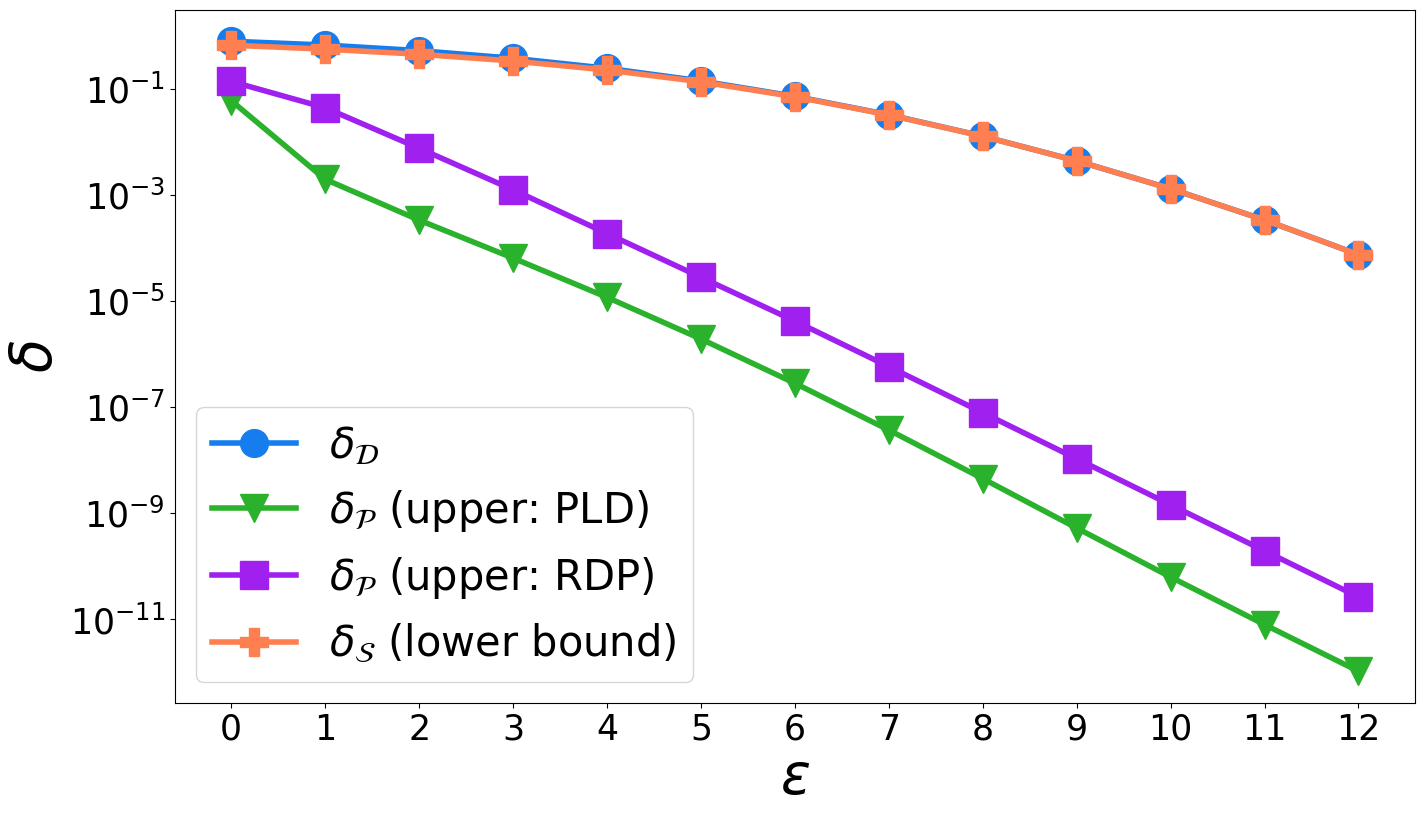}
\caption{$\deltaD(\eps)$, upper bounds on $\deltaP(\eps)$ and a lower bound on $\deltaS(\eps)$ for varying $\eps$ and fixed $\sigma = 0.4$ and $T = 10,000$.}
\label{fig:delta-vs-eps-1}
\end{figure}

In \Cref{fig:delta-vs-eps-1}, we set $\sigma=0.4$ and number of steps $T = 10,000$ and plot $\deltaD(\eps)$, an upper bound on $\deltaP(\eps)$ (obtained using \texttt{dp\_accounting}) and a lower bound on $\deltaS(\eps)$ as obtained via \eqref{eq:deltaS-lower-bound}. We find that while $\deltaP(4) \le 1.18 \cdot 10^{-5}$, $\deltaS(4) \ge 0.226$, that is close to $\deltaD(4) \approx 0.244$. Even $\deltaS(12) \ge 7.5 \cdot 10^{-5}$ is larger than $\deltaP(4)$.
While the $(4, 1.2 \cdot 10^{-5})$-$\DP$ guarantee of $\ABLQP$ could have been considered as sufficiently private, $\ABLQS$ only satisfies much worse privacy guarantees. We provide additional examples in \Cref{apx:experiments}.

\subsubsection{Intuition for \Cref{conj:shuffle-dominating}}\label{subsec:intuition}

We attempt to shed some intuition for why $\ABLQS$ does not provide as much amplification over $\ABLQD$, compared to $\ABLQP$, and why we suggest \Cref{conj:shuffle-dominating}.

For sake of intuition, let's consider the setting where the query method $\cA$ always generates the query $\psi_t(x) = x$, and we have two adjacent datasets:
\begin{itemize}[leftmargin=*,nosep]
\item $\bx = (x_1 = -L, \ldots, x_{n-1}=-L, x_n = 1)$, and
\item $\bx' = (x_1 = -L, \ldots, x_{n-1}=-L, x_n = \bot)$.
\end{itemize}
The case of $L > 1$ is not valid, since in this case $|\psi_t(x)| = L > 1$. However, we can still ask how well the privacy of the $n$th example is preserved by $\ABLQB$, by considering the hockey stick divergence between $\ABLQB(\bx)$ and $\ABLQB(\bx')$.

The crucial difference between $\ABLQP$ and $\ABLQS$ is that the privacy analysis of $\ABLQP$ does not depend at all on the non-differing records in the two datasets.%
In the case of $\ABLQS$, we observe that for any fixed $\sigma$ and $T$, the hockey stick divergence $D_{e^{\eps}}(\ABLQS(\bx) \| \ABLQS(\bx'))$ approaches $\deltaD(\eps)$ as $L \to \infty$. We sketch this argument intuitively: For any batch $S_t$ that does not contain $n$, the corresponding $g_t = -bL + e_t$ for $e_t \sim \cN(0, \sigma^2)$. Whereas for the batch $S_t$ that contains $n$, the corresponding $g_t = -(b-1)L + 1 + e_t$ in case of input $\bx$ or $g_t \sim -(b-1)L + e_t$ in case of input $\bx'$. As $L \to \infty$, we can identify the batch $S_t$ that contains $n$ with probability approaching $1$, thereby not providing any amplification.

In summary, the main differing aspect about $\ABLQS$ and $\ABLQP$ is that in the former, the non-differing examples can leak information about the location of the differing example in the shuffled order, but it is not the case in the latter. While we sketched an argument that works asymptotically as $L \to \infty$, we see glimpses of it already at $L = 1$.

Our intuitive reasoning behind \Cref{conj:shuffle-dominating} is that even in the case of (vector-valued) adaptive query methods, in order to ``leak the most information'' about the differing record $x_i$ between $\bx$ and $\bx'$, it seems natural that the query $\psi_t(x_j)$ should evaluate to $-\psi_t(x_i)$ for all $j \ne i$. If the query method satisfies this condition for all $t$, then it is easy to show that $(\PS, \QS)$ tightly dominates $(\ABLQS(\bx), \ABLQS(\bx'))$. \Cref{conj:shuffle-dominating} then asserts that this is indeed the worst case.

\section{Conclusion \& Future Directions}\label{sec:conclusion}

We identified significant gaps between the privacy analysis of adaptive batch linear query mechanisms, under the deterministic, Poisson, and shuffle batch samplers. We find that while shuffling always provides better privacy guarantees over deterministic batching, Poisson batch sampling can provide a worse privacy guarantee than even deterministic sampling at large $\eps$. But perhaps most surprisingly, we demonstrate that the amplification guarantees due to shuffle batch sampling can be severely limited compared to the amplification due to Poisson subsampling, in various regimes that could be considered of practical interest.

Several interesting directions remain to be investigated. In our work, we provide a technique to only provide a {\em lower bound} on the privacy guarantee when using a shuffle batch sampler. It would be interesting to have a tight accounting method for $\ABLQS$. A first step towards this could be to establish \Cref{conj:shuffle-dominating}, which if true, might make numerical accountants for computing the hockey stick divergence possible. While this involves computing a high-dimensional integral, it might be possible to approximate using importance sampling; e.g., such approaches have been attempted for $\ABLQP$~\cite{wang23randomized}. Also, our approach for analyzing the privacy with shuffle batch sampler is limited to a ``single epoch'' mechanism, whereas in practice, it is common to use $\DPSGD$ with multiple epochs. Extending our approach to multiple epochs will be interesting.%

However, it remains to be seen how the utility of $\DPSGD$ would be affected when we use the correct privacy analysis for $\ABLQS$ instead of the analysis for $\ABLQP$, which has been used extensively so far and treated as a good ``approximation''. Alternative approaches such as $\mathsf{DP}\text{-}\mathsf{FTRL}$~\cite{kairouz21practical,mcmahan22dpmf} that do not rely on amplification might turn out to be better if we instead use the correct analysis for $\ABLQS$, in the regimes where such methods are currently reported to under perform compared to $\DPSGD$.

An important point to note is that the model of shuffle batch sampler we consider here is a simple one. There are various types of data loaders used in practice, which are not necessarily captured by our model. For example \texttt{tf.data.Dataset.shuffle} takes in a parameter of buffer size $b$. It returns a random record among the first $b$ records, and immediately replaces it with the next record ($(b+1)$st in this case), and continues repeating this process. This leads to an asymmetric form of shuffling, when the dataset size exceeds the size of the buffer. Such batch samplers might thus require more careful privacy analysis.

The notion of $\DP$ aims to guarantee privacy even in the worst case. For example in the context of $\DPSGD$, it aims to protect privacy of a single record even when the model trainer and all other records participating in the training are colluding to leak information about this one record. And moreover, releasing the final model is assumed to leak as much information as releasing all intermediate iterates.
Such strong adversarial setting might make obtaining good utility to be difficult.
Alternative techniques for privacy amplification such as amplification by iteration~\cite{feldman18iteration,altschuler22privacy} or through convergence of Langevin dynamics~\cite{chourasia21langevin} have been studied, where only the last iterate of the model is released. However, these analyses rely on additional assumptions such as convexity and smoothness of the loss functions. Investigating whether it is possible to relax these assumptions to make amplification by iteration applicable to non-convex models, even if under some assumptions that are applicable to the ones used in practice, is an interesting future direction.

\section*{Acknowledgments}
We thank the anonymous ICML reviewers for their thoughtful comments and suggestions which have improved the presentation of this paper. We also thank Christian Janos Lebeda, Matthew Regehr, Gautam Kamath, and Thomas Steinke for correspondence about their concurrent work~\cite{lebeda2024avoiding}.

\section*{Impact Statement}

This paper presents work that points out that the particular type of batch sampling used can play a significant role in the privacy analysis of DP-SGD type of algorithms. The impact we see coming from our work is to make practitioners of DP more aware of these gaps. There are many potential societal consequences of DP itself, none which we feel must be specifically highlighted here.

\bibliographystyle{icml2024}
\bibliography{main.bbl}

\begin{thebibliography}{46}
\providecommand{\natexlab}[1]{#1}
\providecommand{\url}[1]{\texttt{#1}}
\expandafter\ifx\csname urlstyle\endcsname\relax
  \providecommand{\doi}[1]{doi: #1}\else
  \providecommand{\doi}{doi: \begingroup \urlstyle{rm}\Url}\fi

\bibitem[Abadi et~al.(2016)Abadi, Chu, Goodfellow, McMahan, Mironov, Talwar,
  and Zhang]{abadi16deep}
Abadi, M., Chu, A., Goodfellow, I.~J., McMahan, H.~B., Mironov, I., Talwar, K.,
  and Zhang, L.
\newblock Deep learning with differential privacy.
\newblock In \emph{CCS}, pp.\  308--318, 2016.

\bibitem[Altschuler \& Talwar(2022)Altschuler and Talwar]{altschuler22privacy}
Altschuler, J.~M. and Talwar, K.
\newblock Privacy of noisy stochastic gradient descent: More iterations without
  more privacy loss.
\newblock In \emph{NeurIPS}, 2022.

\bibitem[Anil et~al.(2022)Anil, Ghazi, Gupta, Kumar, and
  Manurangsi]{anil22dpbert}
Anil, R., Ghazi, B., Gupta, V., Kumar, R., and Manurangsi, P.
\newblock Large-scale differentially private {BERT}.
\newblock In \emph{EMNLP (Findings)}, pp.\  6481--6491, 2022.

\bibitem[Balle \& Wang(2018)Balle and Wang]{balle18improving}
Balle, B. and Wang, Y.
\newblock Improving the {G}aussian mechanism for differential privacy:
  Analytical calibration and optimal denoising.
\newblock In \emph{ICML}, pp.\  403--412, 2018.

\bibitem[Balle et~al.(2022)Balle, Berrada, De, Ghalebikesabi, Hayes, Pappu,
  Smith, and Stanforth]{jax-privacy2022github}
Balle, B., Berrada, L., De, S., Ghalebikesabi, S., Hayes, J., Pappu, A., Smith,
  S.~L., and Stanforth, R.
\newblock {JAX}-{P}rivacy: Algorithms for privacy-preserving machine learning
  in {JAX}, 2022.
\newblock URL \url{http://github.com/google-deepmind/jax_privacy}.

\bibitem[Bu et~al.(2022)Bu, Mao, and Xu]{bu2022scalable}
Bu, Z., Mao, J., and Xu, S.
\newblock Scalable and efficient training of large convolutional neural
  networks with differential privacy.
\newblock \emph{NeurIPS}, pp.\  38305--38318, 2022.

\bibitem[Chen et~al.(2020)Chen, Orekondy, and Fritz]{NEURIPS2020_9547ad6b}
Chen, D., Orekondy, T., and Fritz, M.
\newblock {GS-WGAN:} a gradient-sanitized approach for learning differentially
  private generators.
\newblock In \emph{NeurIPS}, pp.\  12673--12684, 2020.

\bibitem[Chourasia et~al.(2021)Chourasia, Ye, and Shokri]{chourasia21langevin}
Chourasia, R., Ye, J., and Shokri, R.
\newblock Differential privacy dynamics of {L}angevin diffusion and noisy
  gradient descent.
\newblock In \emph{NeurIPS}, pp.\  14771--14781, 2021.

\bibitem[De et~al.(2022)De, Berrada, Hayes, Smith, and Balle]{de22unlocking}
De, S., Berrada, L., Hayes, J., Smith, S.~L., and Balle, B.
\newblock Unlocking high-accuracy differentially private image classification
  through scale.
\newblock \emph{arXiv}, 2204.13650, 2022.

\bibitem[Denison et~al.(2023)Denison, Ghazi, Kamath, Kumar, Manurangsi, Narra,
  Sinha, Varadarajan, and Zhang]{denison23ad}
Denison, C., Ghazi, B., Kamath, P., Kumar, R., Manurangsi, P., Narra, K.~G.,
  Sinha, A., Varadarajan, A.~V., and Zhang, C.
\newblock Private ad modeling with {DP-SGD}.
\newblock In \emph{AdKDD}, 2023.

\bibitem[Dockhorn et~al.(2022)Dockhorn, Cao, Vahdat, and
  Kreis]{dockhorn2022differentially}
Dockhorn, T., Cao, T., Vahdat, A., and Kreis, K.
\newblock Differentially private diffusion models.
\newblock \emph{arXiv}, 2210.09929, 2022.

\bibitem[Doroshenko et~al.(2022)Doroshenko, Ghazi, Kamath, Kumar, and
  Manurangsi]{doroshenko22connect}
Doroshenko, V., Ghazi, B., Kamath, P., Kumar, R., and Manurangsi, P.
\newblock Connect the dots: Tighter discrete approximations of privacy loss
  distributions.
\newblock \emph{PoPETS}, 2022\penalty0 (4):\penalty0 552--570, 2022.

\bibitem[Erlingsson et~al.(2019)Erlingsson, Feldman, Mironov, Raghunathan,
  Talwar, and Thakurta]{erlingsson19amplification}
Erlingsson, {\'{U}}., Feldman, V., Mironov, I., Raghunathan, A., Talwar, K.,
  and Thakurta, A.
\newblock Amplification by shuffling: From local to central differential
  privacy via anonymity.
\newblock In \emph{SODA}, pp.\  2468--2479, 2019.

\bibitem[Fang et~al.(2022)Fang, Du, and Wu]{fang2022differentially}
Fang, L., Du, B., and Wu, C.
\newblock Differentially private recommender system with variational
  autoencoders.
\newblock \emph{Knowledge-Based Systems}, 250:\penalty0 109044, 2022.

\bibitem[Feldman et~al.(2018)Feldman, Mironov, Talwar, and
  Thakurta]{feldman18iteration}
Feldman, V., Mironov, I., Talwar, K., and Thakurta, A.
\newblock Privacy amplification by iteration.
\newblock In \emph{FOCS}, pp.\  521--532, 2018.

\bibitem[Feldman et~al.(2021)Feldman, McMillan, and Talwar]{feldman21hiding}
Feldman, V., McMillan, A., and Talwar, K.
\newblock Hiding among the clones: {A} simple and nearly optimal analysis of
  privacy amplification by shuffling.
\newblock In \emph{FOCS}, pp.\  954--964, 2021.

\bibitem[Feldman et~al.(2023)Feldman, McMillan, and Talwar]{feldman23stronger}
Feldman, V., McMillan, A., and Talwar, K.
\newblock Stronger privacy amplification by shuffling for {R{\'{e}}nyi} and
  approximate differential privacy.
\newblock In \emph{SODA}, pp.\  4966--4981, 2023.

\bibitem[Ghazi et~al.(2022)Ghazi, Kamath, Kumar, and Manurangsi]{ghazi22faster}
Ghazi, B., Kamath, P., Kumar, R., and Manurangsi, P.
\newblock Faster privacy accounting via evolving discretization.
\newblock In \emph{ICML}, pp.\  7470--7483, 2022.

\bibitem[Google Colab()]{googlecolab}
Google Colab.
\newblock URL \url{https://colab.research.google.com/}.

\bibitem[{Google's DP Library.}(2020)]{GoogleDP}
{Google's DP Library.}
\newblock D{P} {A}ccounting {L}ibrary.
\newblock
  \url{https://github.com/google/differential-privacy/tree/main/python/dp_accounting},
  2020.

\bibitem[Gopi et~al.(2021)Gopi, Lee, and Wutschitz]{gopi21numerical}
Gopi, S., Lee, Y.~T., and Wutschitz, L.
\newblock Numerical composition of differential privacy.
\newblock In \emph{NeurIPS}, pp.\  11631--11642, 2021.

\bibitem[He et~al.(2023)He, Li, Yu, Zhang, Kulkarni, Lee, Backurs, Yu, and
  Bian]{he2022exploring}
He, J., Li, X., Yu, D., Zhang, H., Kulkarni, J., Lee, Y.~T., Backurs, A., Yu,
  N., and Bian, J.
\newblock Exploring the limits of differentially private deep learning with
  group-wise clipping.
\newblock In \emph{ICLR}, 2023.

\bibitem[Kairouz et~al.(2021)Kairouz, McMahan, Song, Thakkar, Thakurta, and
  Xu]{kairouz21practical}
Kairouz, P., McMahan, B., Song, S., Thakkar, O., Thakurta, A., and Xu, Z.
\newblock Practical and private (deep) learning without sampling or shuffling.
\newblock In \emph{ICML}, pp.\  5213--5225, 2021.

\bibitem[Klause et~al.(2022)Klause, Ziller, Rueckert, Hammernik, and
  Kaissis]{klause2022differentially}
Klause, H., Ziller, A., Rueckert, D., Hammernik, K., and Kaissis, G.
\newblock Differentially private training of residual networks with scale
  normalisation.
\newblock \emph{arXiv}, 2203.00324, 2022.

\bibitem[Koskela et~al.(2020)Koskela, J{\"a}lk{\"o}, and
  Honkela]{koskela2020computing}
Koskela, A., J{\"a}lk{\"o}, J., and Honkela, A.
\newblock Computing tight differential privacy guarantees using {FFT}.
\newblock In \emph{AISTATS}, pp.\  2560--2569, 2020.

\bibitem[Lebeda et~al.(2024)Lebeda, Regehr, Kamath, and
  Steinke]{lebeda2024avoiding}
Lebeda, C.~J., Regehr, M., Kamath, G., and Steinke, T.
\newblock Avoiding pitfalls for privacy accounting of subsampled mechanisms
  under composition, 2024.
\newblock URL \url{https://arxiv.org/abs/2405.20769}.

\bibitem[Li et~al.(2022)Li, Tramer, Liang, and Hashimoto]{li2021large}
Li, X., Tramer, F., Liang, P., and Hashimoto, T.
\newblock Large language models can be strong differentially private learners.
\newblock In \emph{ICLR}, 2022.

\bibitem[McMahan et~al.(2022)McMahan, Rush, and Thakurta]{mcmahan22dpmf}
McMahan, B., Rush, K., and Thakurta, A.~G.
\newblock Private online prefix sums via optimal matrix factorizations.
\newblock \emph{arXiv}, 2202.08312, 2022.

\bibitem[Meiser \& Mohammadi(2018)Meiser and Mohammadi]{meiser2018tight}
Meiser, S. and Mohammadi, E.
\newblock Tight on budget? {T}ight bounds for $r$-fold approximate differential
  privacy.
\newblock In \emph{CCS}, pp.\  247--264, 2018.

\bibitem[Microsoft.(2021)]{MicrosoftDP}
Microsoft.
\newblock A fast algorithm to optimally compose privacy guarantees of
  differentially private ({DP}) mechanisms to arbitrary accuracy.
\newblock \url{https://github.com/microsoft/prv_accountant}, 2021.

\bibitem[Mironov(2017)]{mironov17renyi}
Mironov, I.
\newblock R{\'{e}}nyi differential privacy.
\newblock In \emph{CSF}, pp.\  263--275, 2017.

\bibitem[Papernot et~al.(2021)Papernot, Thakurta, Song, Chien, and
  Erlingsson]{papernot2021tempered}
Papernot, N., Thakurta, A., Song, S., Chien, S., and Erlingsson, {\'U}.
\newblock Tempered sigmoid activations for deep learning with differential
  privacy.
\newblock In \emph{AAAI}, pp.\  9312--9321, 2021.

\bibitem[Ponomareva et~al.(2023)Ponomareva, Hazimeh, Kurakin, Xu, Denison,
  McMahan, Vassilvitskii, Chien, and Thakurta]{ponomareva23dpfy}
Ponomareva, N., Hazimeh, H., Kurakin, A., Xu, Z., Denison, C., McMahan, H.~B.,
  Vassilvitskii, S., Chien, S., and Thakurta, A.~G.
\newblock How to dp-fy {ML:} {A} practical guide to machine learning with
  differential privacy.
\newblock \emph{J. Artif. Intell. Res.}, 77:\penalty0 1113--1201, 2023.

\bibitem[Prediger \& Koskela(2020)Prediger and Koskela]{DPBayes}
Prediger, L. and Koskela, A.
\newblock Code for computing tight guarantees for differential privacy.
\newblock \url{https://github.com/DPBayes/PLD-Accountant}, 2020.

\bibitem[Sommer et~al.(2019)Sommer, Meiser, and Mohammadi]{sommer2019privacy}
Sommer, D.~M., Meiser, S., and Mohammadi, E.
\newblock Privacy loss classes: The central limit theorem in differential
  privacy.
\newblock \emph{PoPETS}, 2019\penalty0 (2):\penalty0 245--269, 2019.

\bibitem[Tensorflow Privacy({\natexlab{a}})]{tf_privacy}
Tensorflow Privacy, {\natexlab{a}}.
\newblock URL
  \url{https://www.tensorflow.org/responsible_ai/privacy/api_docs/python/tf_privacy}.

\bibitem[Tensorflow Privacy({\natexlab{b}})]{tf_privacy_statement}
Tensorflow Privacy, {\natexlab{b}}.
\newblock URL
  \url{https://www.tensorflow.org/responsible_ai/privacy/api_docs/python/tf_privacy/compute_dp_sgd_privacy_statement}.
\newblock Note about compute\_dp\_sgd\_privacy\_statement.

\bibitem[Torkzadehmahani et~al.(2019)Torkzadehmahani, Kairouz, and
  Paten]{Torkzadehmahani_2019_CVPR_Workshops}
Torkzadehmahani, R., Kairouz, P., and Paten, B.
\newblock {DP-CGAN}: Differentially private synthetic data and label
  generation.
\newblock In \emph{CVPR Workshops}, 2019.

\bibitem[Tramer \& Boneh(2021)Tramer and Boneh]{tramer2020differentially}
Tramer, F. and Boneh, D.
\newblock Differentially private learning needs better features (or much more
  data).
\newblock In \emph{ICLR}, 2021.

\bibitem[Vadhan(2017)]{vadhan17complexity}
Vadhan, S.
\newblock \emph{The Complexity of Differential Privacy}.
\newblock Springer, 2017.

\bibitem[Wang et~al.(2023)Wang, Mahloujifar, Wu, Jia, and
  Mittal]{wang23randomized}
Wang, J.~T., Mahloujifar, S., Wu, T., Jia, R., and Mittal, P.
\newblock A randomized approach to tight privacy accounting.
\newblock In \emph{NeurIPS}, 2023.

\bibitem[Yousefpour et~al.(2021)Yousefpour, Shilov, Sablayrolles, Testuggine,
  Prasad, Malek, Nguyen, Ghosh, Bharadwaj, Zhao, Cormode, and
  Mironov]{yousefpour21opacus}
Yousefpour, A., Shilov, I., Sablayrolles, A., Testuggine, D., Prasad, K.,
  Malek, M., Nguyen, J., Ghosh, S., Bharadwaj, A., Zhao, J., Cormode, G., and
  Mironov, I.
\newblock Opacus: User-friendly differential privacy library in {PyTorch}.
\newblock \emph{arXiv}, 2109.12298, 2021.

\bibitem[Yu et~al.(2022)Yu, Naik, Backurs, Gopi, Inan, Kamath, Kulkarni, Lee,
  Manoel, Wutschitz, et~al.]{yu2021differentially}
Yu, D., Naik, S., Backurs, A., Gopi, S., Inan, H.~A., Kamath, G., Kulkarni, J.,
  Lee, Y.~T., Manoel, A., Wutschitz, L., et~al.
\newblock Differentially private fine-tuning of language models.
\newblock In \emph{ICLR}, 2022.

\bibitem[Zeighami et~al.(2022)Zeighami, Ahuja, Ghinita, and
  Shahabi]{zeighami2021neural}
Zeighami, S., Ahuja, R., Ghinita, G., and Shahabi, C.
\newblock A neural database for differentially private spatial range queries.
\newblock In \emph{VLDB}, pp.\  1066--1078, 2022.

\bibitem[Zhu et~al.(2022)Zhu, Dong, and Wang]{zhu22optimal}
Zhu, Y., Dong, J., and Wang, Y.
\newblock Optimal accounting of differential privacy via characteristic
  function.
\newblock In \emph{AISTATS}, pp.\  4782--4817, 2022.

\bibitem[Ziller et~al.(2021)Ziller, Usynin, Braren, Makowski, Rueckert, and
  Kaissis]{ziller2021medical}
Ziller, A., Usynin, D., Braren, R., Makowski, M., Rueckert, D., and Kaissis, G.
\newblock Medical imaging deep learning with differential privacy.
\newblock \emph{Scientific Reports}, 11\penalty0 (1):\penalty0 13524, 2021.

\end{thebibliography}

\newpage
\appendix
\onecolumn
\section{Additional Evaluations}\label{apx:experiments}

\subsection{\boldmath $\eps$ vs. $\sigma$ for fixed $\delta$ and $T$}\label{subsec:eps_vs_sigma}
We first plot $\eps$ against $\sigma$ for fixed $\delta$ and $T$. We compute upper bounds on $\epsP(\delta)$ using R\'enyi DP (RDP) as well as using privacy loss distributions (PLD). These accounting methods are provided in the open source Google \texttt{dp\_accounting} library~\cite{GoogleDP}.

In particular we consistently find that for small values of $\sigma$, $\ABLQS$ provides almost no improvement over $\ABLQD$, and has $\epsS$ that is significantly larger than $\epsP$.
\begin{itemize}[topsep=2mm,leftmargin=4mm,nosep]
\item In \Cref{fig:eps-vs-sigma-2}, we set $\delta=10^{-6}$ and number of steps $T = 100,000$.
In particular, for $\sigma = 0.4$, we find that $\epsP(\delta) \le 3$ (as per PLD accounting) and $\epsP(\delta) \le 4.71$ (as per RDP accounting), whereas on the other hand, $\epsS(\delta) \ge 14.45$, which is very close to $\epsD(\delta)$. For $\sigma = 1.3$, we find that $\epsP(\delta) < 0.01$ (as per PLD accounting), whereas, $\epsS(\delta) > 0.029$.

\begin{figure}[h]
\centering
\includegraphics[width=10cm]{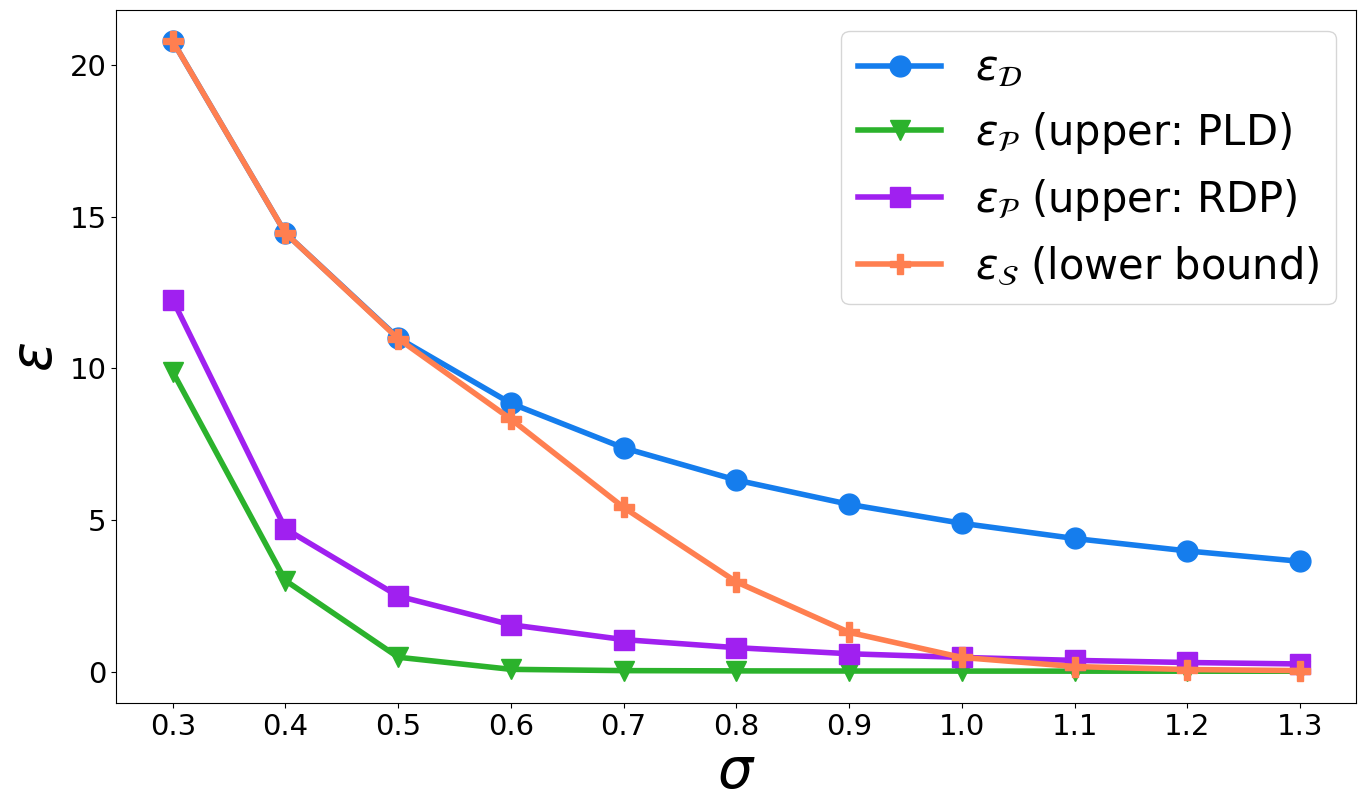}
\caption{$\epsD(\delta)$, upper bounds on $\epsP(\delta)$ and a lower bound on $\epsS(\delta)$ for varying $\sigma$ and fixed $\delta = 10^{-6}$ and $T = 10,000$.}
\label{fig:eps-vs-sigma-2}
\end{figure}

\item In \Cref{fig:eps-vs-sigma-3}, we set $\delta=10^{-5}$ and number of steps $T = 1000$. In particular, for $\sigma = 0.7$, $\epsP(\delta) \le 0.61$ (as per PLD accounting) and $\epsP(\delta) \le 1.64$ (as per RDP accounting), whereas on the other hand, $\epsS(\delta) \ge 6.528$ and $\epsD(\delta) \approx 6.652$. For $\sigma = 1.3$, we find that $\epsP(\delta) < 0.092$ (as per PLD accounting), whereas, $\epsS(\delta) > 0.83$.
\end{itemize}

\begin{figure}[h]
\centering
\includegraphics[width=10cm]{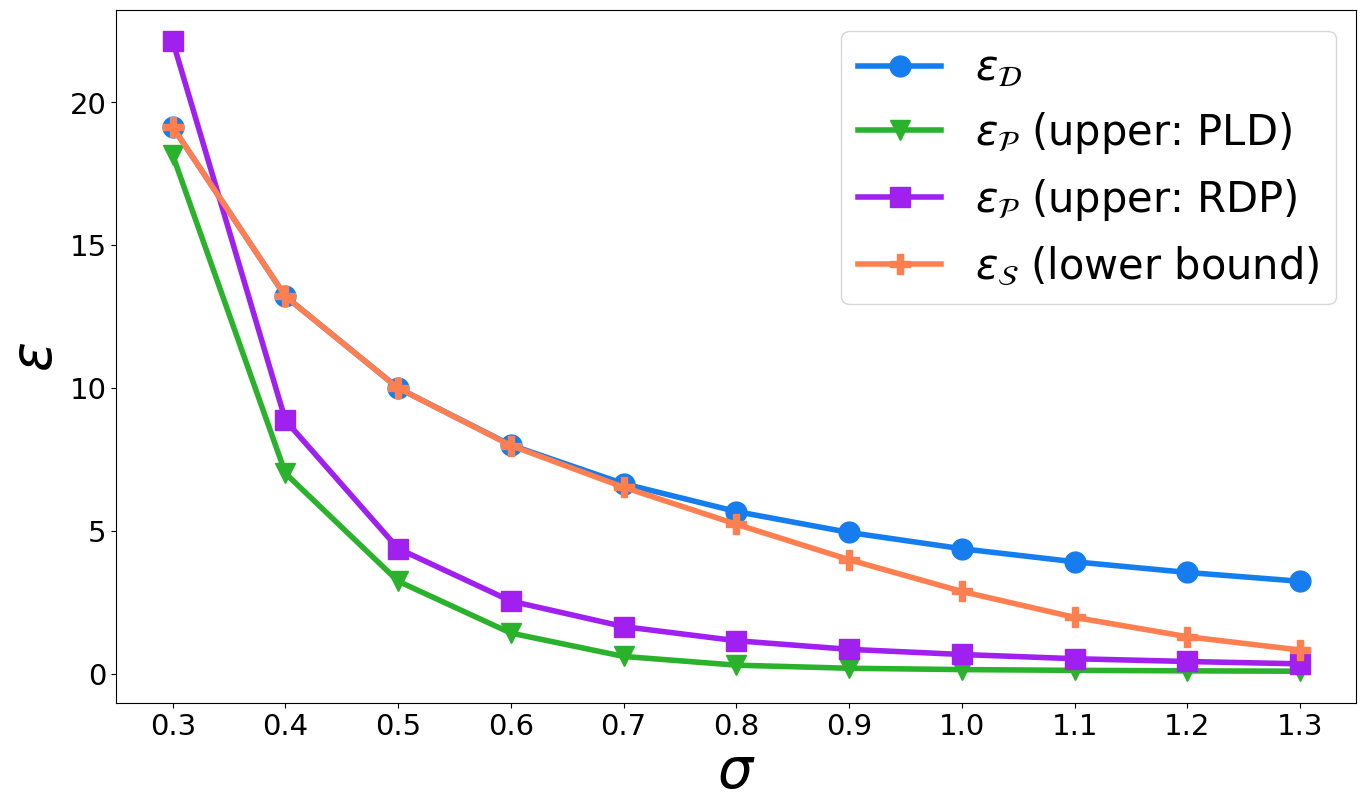}
\caption{$\epsD(\delta)$, upper bounds on $\epsP(\delta)$ and a lower bound on $\epsS(\delta)$ for varying $\sigma$ and fixed $\delta = 10^{-5}$ and $T = 1000$.}
\label{fig:eps-vs-sigma-3}
\end{figure}

\subsection{\boldmath $\delta$ vs. $\eps$ for fixed $\sigma$ and $T$}\label{subsec:delta_vs_eps}

Next, we plot $\delta$ against $\eps$ for fixed $\sigma$ and $T$. We compute upper bounds on $\deltaP(\eps)$ using R\'enyi DP (RDP) as well as using privacy loss distributions (PLD).

In particular when $\sigma$ is closer to $1.0$, we find that our lower bound on $\deltaS(\eps)$ is distinctly smaller than $\deltaD(\eps)$, but still significantly larger than $\deltaP(\eps)$.
\begin{itemize}[nosep]
\item In \Cref{fig:delta-vs-eps-2}, we set $\sigma=0.8$ and number of steps $T = 1000$. In particular, while $\deltaP(1) \le 9.873 \cdot 10^{-9}$ (as per PLD accounting) and $\deltaP(1) \le 3.346\cdot 10^{-5}$ (as per RDP accounting), we have that $\deltaS(1) \ge 0.018$ and $\deltaS(4) \ge 1.6 \cdot 10^{-4}$.

For $\eps = 4$, we find the upper bound using PLD accounting to be larger than the upper bound using R\'enyi DP accounting. This is attributable to the numerical instability in PLD accounting when $\delta$ is very small.

\begin{figure}[h]
\centering
\includegraphics[width=10cm]{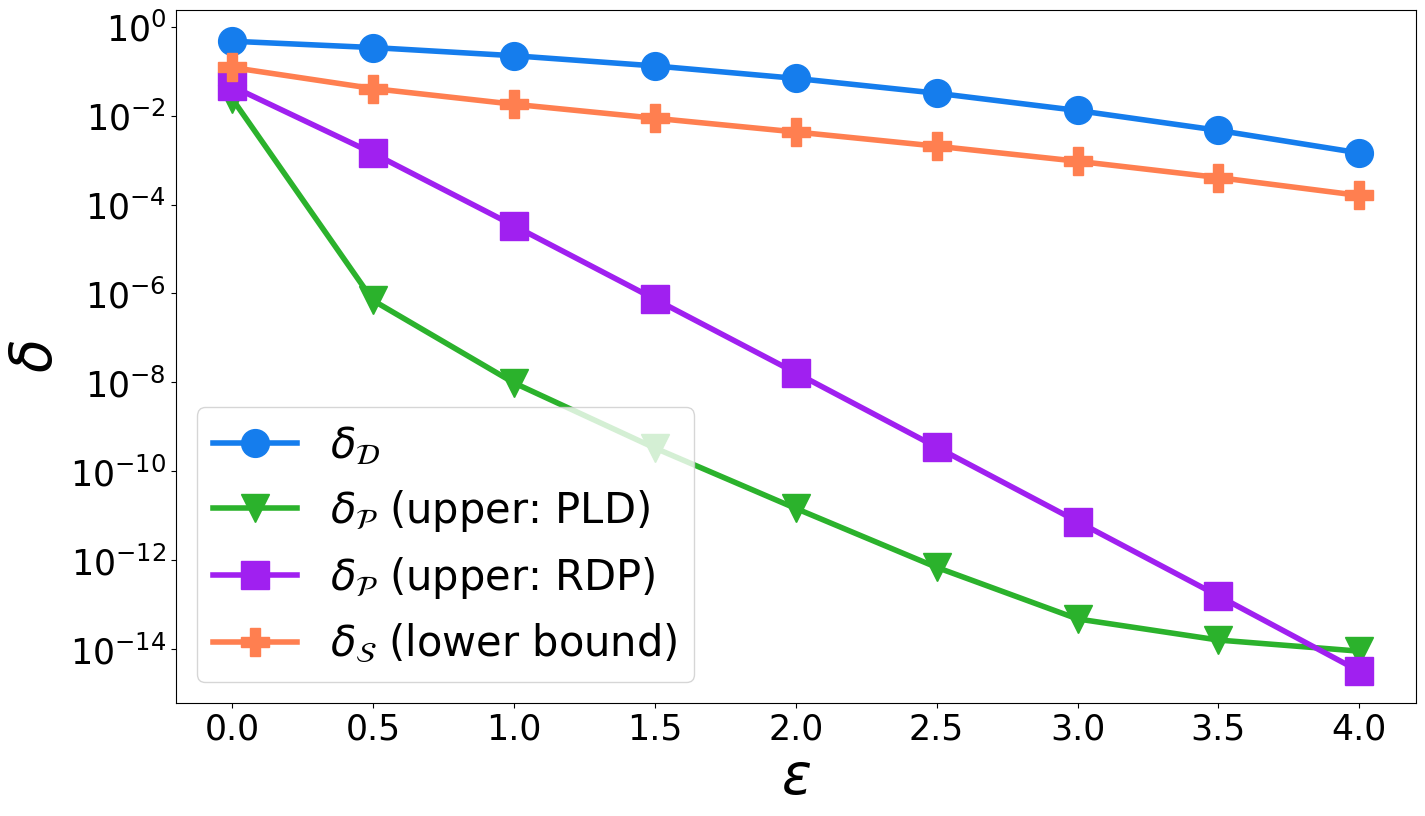}
\caption{$\deltaD(\eps)$, upper bounds on $\deltaP(\eps)$ and a lower bound on $\deltaS(\eps)$ for varying $\eps$ and fixed $\sigma = 0.8$ and $T = 1000$.}
\label{fig:delta-vs-eps-2}
\end{figure}

\item In \Cref{fig:delta-vs-eps-3}, we set $\sigma=1.0$ and number of steps $T = 1000$. In particular, while $\deltaP(1) \le 2.06 \cdot 10^{-10}$ (as per PLD accounting) and $\deltaP(1) \le 8.45 \cdot 10^{-5}$ (as per RDP accounting), we have that $\deltaS(1) \ge 0.004$ and $\deltaS(4) \ge 4.38 \cdot 10^{-7}$ (last one not shown in plot).

For $\eps > 1.0$, we find the upper bound using PLD accounting appears to not decrease as much, which could be due to the numerical instability in PLD accounting when $\delta$ is very small.

\begin{figure}[h]
\centering
\includegraphics[width=10cm]{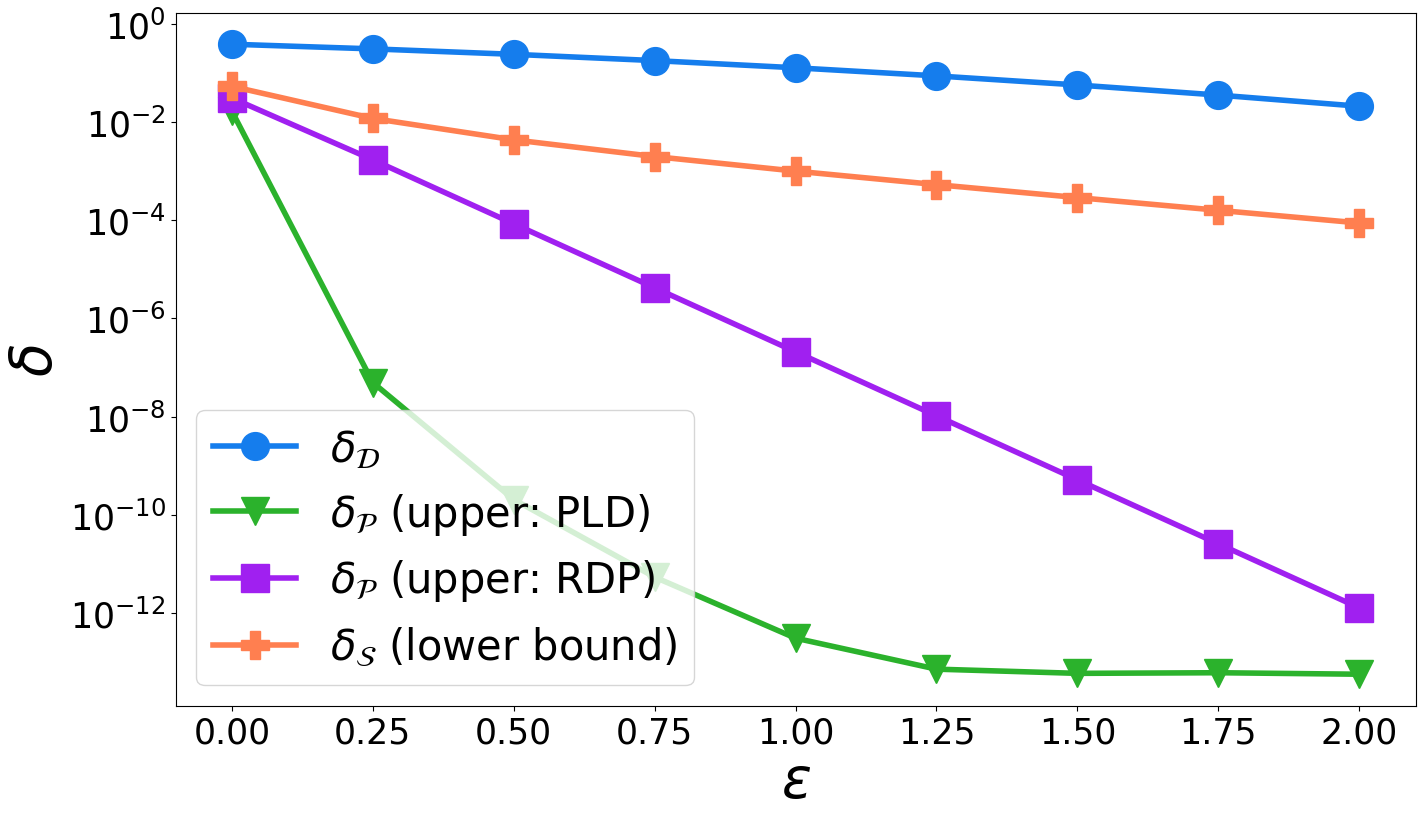}
\caption{$\deltaD(\eps)$, upper bounds on $\deltaP(\eps)$ and a lower bound on $\deltaS(\eps)$ for varying $\eps$ and fixed $\sigma = 1.0$ and $T = 1000$.}
\label{fig:delta-vs-eps-3}
\end{figure}

\end{itemize}

\section{\boldmath Proof of \texorpdfstring{\Cref{thm:D-vs-S}}{Theorem~\ref{thm:D-vs-S}}: ($\ABLQD$ vs $\ABLQS$)}\label{apx:D-vs-S}

We use the joint convexity property of hockey stick divergence. While this is standard, we include a proof for completeness.

\begin{lemma}[Joint Convexity of Hockey Stick Divergence]\label{lem:joint-convexity}
Given two mixture distributions $P = \sum_{i=1}^m \alpha_i P_i$ and $Q = \sum_{i=1}^m \alpha_i Q_i$, it holds for all $\eps \in \R$ that
$
D_{e^\eps}(P \| Q) ~\le~ \sum_i \alpha_i D_{e^\eps}(P_i \| Q_i)\,.
$
\end{lemma}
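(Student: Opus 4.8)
The plan is to work directly from the variational definition $D_{e^\eps}(P \| Q) = \sup_E \{ P(E) - e^\eps Q(E) \}$ and exploit the linearity of the mixtures in each event. First I would fix an arbitrary measurable event $E$ and write, using $P = \sum_i \alpha_i P_i$ and $Q = \sum_i \alpha_i Q_i$,
\[
P(E) - e^\eps Q(E) ~=~ \sum_{i=1}^m \alpha_i \prn{P_i(E) - e^\eps Q_i(E)}.
\]
Next I would bound each summand using that $P_i(E) - e^\eps Q_i(E) \le \sup_{E'} \{ P_i(E') - e^\eps Q_i(E') \} = D_{e^\eps}(P_i \| Q_i)$, which gives $P(E) - e^\eps Q(E) \le \sum_i \alpha_i D_{e^\eps}(P_i \| Q_i)$. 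Crucially, the right-hand side no longer depends on $E$, so taking the supremum over $E$ on the left yields the claim immediately.

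An alternative route, worth mentioning for completeness, uses the integral form $D_{e^\eps}(P \| Q) = \int [P(x) - e^\eps Q(x)]_+ \, dx$, which is valid because the supremum is attained at $E^\star = \{ x : P(x) > e^\eps Q(x) \}$. Since the positive-part map $z \mapsto [z]_+$ is convex, Jensen's inequality (applied with the weights $\alpha_i$) gives the pointwise bound $\bigl[ \sum_i \alpha_i (P_i(x) - e^\eps Q_i(x)) \bigr]_+ \le \sum_i \alpha_i [P_i(x) - e^\eps Q_i(x)]_+$, and integrating over $x$ recovers the lemma.

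There is essentially no real obstacle here; the statement is a standard consequence of linearity plus convexity. The only points requiring care are that the weights $\alpha_i$ be nonnegative and sum to $1$ (so that $P$ and $Q$ are genuine distributions and the mixture identity holds), and that one bounds each component \emph{before} taking the supremum over $E$, so that the per-component suprema $D_{e^\eps}(P_i \| Q_i)$ can be pulled out uniformly in $E$. I would present the first (supremum-based) argument as the main proof since it avoids invoking the integral representation.
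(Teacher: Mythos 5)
Your main argument is correct and is essentially identical to the paper's proof: both expand $P(E) - e^{\eps}Q(E)$ by linearity of the mixture and then bound the supremum of the sum by the sum of the per-component suprema (your ``fix $E$ first, then take the sup'' phrasing is just a restatement of that step). The alternative integral-form argument you sketch is also valid but unnecessary; the paper uses only the supremum-based route.
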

\begin{proof}
We have that
\begin{align*}
D_{e^{\eps}}(P \| Q)
&~=~ \sup_E \set{P(E) - e^{\eps} Q(E)}\\
&~=~ \sup_E \set{\sum_{i=1}^m \alpha_i \prn{P_i(E) - e^{\eps} Q_i(E)}} \\
&~\le~ \sum_{i=1}^m \alpha_i \cdot \sup_E \set{P_i(E) - e^\eps Q_i(E)}
~=~ \sum_{i=1}^m \alpha_i D_{e^\eps}(P_i \| Q_i)\,.\qedhere
\end{align*}  
\end{proof}

Thus, it follows that shuffling the dataset first cannot degrade the privacy guarantee of {\em any} mechanism as shown below.

\begin{lemma}\label{lem:some-amplification-by-shuffling}
Fix a mechanism $\cM : \cX^* \to \Delta_{\cO}$, and let $\cM_\cS$ be defined as $\cM_\cS(\bx) := \cM(\bx_{\pi})$ for a random permutation $\pi$ over $[n]$ where $\bx_{\pi} := (x_{\pi(1)}, \ldots, x_{\pi(n)})$. Then, if $\cM$ satisfies $(\eps, \delta)$-$\DP$, then $\cM_\cS$ also satisfies $(\eps, \delta)$-$\DP$.
\end{lemma}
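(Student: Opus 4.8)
The plan is to express $\cM_\cS(\bx)$ as a uniform mixture of the base mechanism applied to every permutation of the input, and then invoke the joint convexity of the hockey stick divergence (\Cref{lem:joint-convexity}). Concretely, by definition of $\cM_\cS$ we have $\cM_\cS(\bx) = \frac{1}{n!}\sum_{\pi} \cM(\bx_\pi)$, where the sum ranges over all $n!$ permutations $\pi$ of $[n]$, each sampled with probability $1/n!$. The essential structural point is that the \emph{same} random permutation $\pi$ is applied regardless of whether the input is $\bx$ or $\bx'$, so the two mixtures $\cM_\cS(\bx)$ and $\cM_\cS(\bx')$ are indexed by the same collection of $\pi$'s with identical weights $\alpha_\pi = 1/n!$. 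This is precisely the aligned-mixture form that \Cref{lem:joint-convexity} requires.

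Next I would check that adjacency is preserved term by term. Fix any adjacent pair $\bx \sim \bx'$; it suffices to bound $D_{e^{\eps}}(\cM_\cS(\bx) \| \cM_\cS(\bx'))$ by $\delta$, since $(\eps,\delta)$-$\DP$ is equivalent to this divergence being at most $\delta$ on all adjacent pairs. As $\bx$ and $\bx'$ differ in exactly one coordinate $i$ (with one entry in $\cX$ and the other equal to $\bot$ under zero-out adjacency), the permuted datasets $\bx_\pi$ and $\bx'_\pi$ differ in exactly the single coordinate $\pi^{-1}(i)$, with the same in-$\cX$/$\bot$ pattern. Hence $\bx_\pi \sim \bx'_\pi$ for every $\pi$, and since $\cM$ is $(\eps,\delta)$-$\DP$ we obtain $D_{e^{\eps}}(\cM(\bx_\pi) \| \cM(\bx'_\pi)) \le \delta$ for each $\pi$.

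Finally I would combine the two observations: applying \Cref{lem:joint-convexity} with $P_\pi = \cM(\bx_\pi)$, $Q_\pi = \cM(\bx'_\pi)$, and $\alpha_\pi = 1/n!$ yields
\[
D_{e^{\eps}}(\cM_\cS(\bx) \| \cM_\cS(\bx')) ~\le~ \frac{1}{n!}\sum_{\pi} D_{e^{\eps}}(\cM(\bx_\pi) \| \cM(\bx'_\pi)) ~\le~ \frac{1}{n!}\sum_{\pi} \delta ~=~ \delta,
\]
which gives the claim. I do not expect a genuine obstacle: the argument is essentially a one-line application of joint convexity. The only points needing care are bookkeeping ones, namely confirming that the two mixtures share the same index set and weights (so that convexity applies with matched terms) and that zero-out adjacency is stable under the common permutation. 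Both are immediate, and indeed the argument would go through for any adjacency notion that is invariant under applying a common permutation to both datasets.
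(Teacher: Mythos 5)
Your proposal is correct and follows essentially the same route as the paper's proof: write $\cM_\cS(\bx)$ and $\cM_\cS(\bx')$ as aligned uniform mixtures $\sum_\pi \frac{1}{n!}\cM(\bx_\pi)$ and $\sum_\pi \frac{1}{n!}\cM(\bx'_\pi)$, and apply the joint convexity of the hockey stick divergence (\Cref{lem:joint-convexity}) together with $D_{e^\eps}(\cM(\bx_\pi)\|\cM(\bx'_\pi)) \le \delta$ for each $\pi$. Your explicit check that $\bx_\pi \sim \bx'_\pi$ under a common permutation is a point the paper leaves implicit, but it is the same argument.
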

\begin{proof}
Consider any adjacent pair of dataset $\bx \sim \bx'$. For any permutation $\pi$ over $[n]$, let
$P_{\pi} := \cM(\bx_{\pi})$ and $Q_{\pi} := \cM(\bx'_{\pi})$, and let $P = \cM_\cS(\bx)$ and $Q = \cM_\cS(\bx')$. It is easy to see that
\begin{align*}\textstyle
P ~=~ \sum_{\pi} \frac{1}{n!}\cdot P_{\pi}
\qquad \text{ and } \qquad
Q ~=~ \sum_{\pi} \frac{1}{n!}\cdot Q_{\pi}\,.
\end{align*}
Since $\cM$ satisfies $(\eps, \delta)$-$\DP$ it follows that $D_{e^\eps}(P_\pi \| Q_\pi) \le \delta$ for all permutations $\pi$.
Thus, from \Cref{lem:joint-convexity}, it follows that $D_{e^\eps}(P \| Q) \le \sum_\pi \frac{1}{n!} D_{e^\eps}(P_\pi \| Q_\pi) \le \delta$. Hence $\cM_{\cS}$ also satisfies $(\eps, \delta)$-$\DP$.
\end{proof}

\begin{proof}[Proof of \Cref{thm:D-vs-S}]
The proof follows by observing that if we choose $\cM = \ABLQD$ in \Cref{lem:some-amplification-by-shuffling}, then $\ABLQS$ is precisely the corresponding mechanism $\cM_\cS$.
\end{proof}

\section{\boldmath Proof of \texorpdfstring{\Cref{thm:D-vs-P-separation}}{Theorem~\ref{thm:D-vs-P-separation}} ($\ABLQD$ vs $\ABLQP$)}\label{apx:D-vs-P}

We first state and prove some intermediate statements required for the proof of \Cref{thm:D-vs-P-separation}. We use the Gaussian measure of a halfspace.
\begin{proposition}\label{prop:gaussian-halfspace-mass}
For $P = \cN(\mu, \sigma^2 I)$ and the set $E := \{ w \in \R^d : a^\top w - b \ge 0\}$, it holds that
$
P(E) = \Ncdf\prn{\frac{a^\top \mu - b}{\sigma \|a\|_2}}\,.
$
\end{proposition}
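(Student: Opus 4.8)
The plan is to reduce the $d$-dimensional Gaussian measure of the halfspace to a one-dimensional tail probability by projecting onto the direction $a$. First I would observe that for $w \sim \cN(\mu, \sigma^2 I)$, the scalar $a^\top w$ is itself Gaussian: by linearity of expectation its mean is $a^\top \mu$, and since the covariance of $w$ is $\sigma^2 I$, its variance is $a^\top (\sigma^2 I) a = \sigma^2 \|a\|_2^2$. Hence $a^\top w \sim \cN(a^\top \mu, \sigma^2 \|a\|_2^2)$, where I implicitly assume $a \ne 0$ so that the stated expression is well-defined.

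Next I would rewrite the event defining $E$ in terms of this scalar. Since $E = \{w \in \R^d : a^\top w \ge b\}$, we have $P(E) = \Pr[a^\top w \ge b]$. Standardizing the Gaussian $a^\top w$ by subtracting its mean and dividing by its standard deviation $\sigma \|a\|_2$ produces a standard normal variable, so
\[
P(E) ~=~ \Pr\sq{\frac{a^\top w - a^\top \mu}{\sigma \|a\|_2} \ge \frac{b - a^\top \mu}{\sigma \|a\|_2}} ~=~ 1 - \Ncdf\prn{\frac{b - a^\top \mu}{\sigma \|a\|_2}}.
\]
Finally, applying the symmetry $1 - \Ncdf(x) = \Ncdf(-x)$ of the standard normal CDF yields $P(E) = \Ncdf\prn{\frac{a^\top \mu - b}{\sigma \|a\|_2}}$, as claimed.

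There is no genuine obstacle here: the only ingredient requiring any care is the affine-image property of Gaussians, namely that a linear functional of an isotropic Gaussian is a one-dimensional Gaussian with the mean and variance computed above, which is entirely standard. The single subtlety worth flagging is the degenerate case $a = 0$, in which $E$ is either empty or all of $\R^d$ and the normalization $\|a\|_2$ vanishes; this case is excluded by the implicit assumption $a \ne 0$, and in all applications of this proposition in the sequel the coefficient vector $a$ is nonzero.
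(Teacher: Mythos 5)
Your proof is correct: the paper states this proposition without proof, treating it as a standard fact, and your argument (projecting onto $a$, noting $a^\top w \sim \cN(a^\top \mu, \sigma^2 \|a\|_2^2)$, standardizing, and using the symmetry $1 - \Ncdf(x) = \Ncdf(-x)$) is exactly the canonical derivation the paper implicitly relies on. Your remark about the degenerate case $a = 0$ is a reasonable precaution, and indeed in the paper's only application the halfspace has $a = \mathbf{1} \ne 0$.
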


\begin{proposition}\label{prop:union-bound}
For all $T \in \N$ and distributions $A, B$, it holds that $D_1(A^{\otimes T} \| B^{\otimes T}) \leq 1 - (1 - D_1(A\|B))^T$.
And hence $D_1(A^{\otimes T} \| B^{\otimes T}) \leq T \cdot D_1(A\|B)$ and equality can occur only if $T = 1$ or $D_1(A\|B) = 0$.
\end{proposition}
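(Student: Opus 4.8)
The plan is to identify $D_1(\cdot \| \cdot)$ with the total variation distance. Indeed, at $\eps = 0$ the definition of hockey stick divergence gives $D_1(A \| B) = \sup_E\{A(E) - B(E)\}$, which is exactly the total variation distance between $A$ and $B$, and I would exploit its coupling characterization: $D_1(A \| B) = \min \Pr[X \ne Y]$ over all couplings $(X, Y)$ of $A$ and $B$, with the minimum attained by the maximal coupling. Throughout, write $\eta := D_1(A \| B)$.

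First I would fix a maximal coupling $(X, Y)$ of $A$ and $B$ in a single coordinate, so that $\Pr[X = Y] = 1 - \eta$, and then take $T$ independent copies of it, producing $\mathbf{X} = (X_1, \ldots, X_T)$ and $\mathbf{Y} = (Y_1, \ldots, Y_T)$ whose marginals are exactly $A^{\otimes T}$ and $B^{\otimes T}$. By independence across coordinates, $\Pr[\mathbf{X} = \mathbf{Y}] = \prod_{t=1}^T \Pr[X_t = Y_t] = (1 - \eta)^T$, and hence $\Pr[\mathbf{X} \ne \mathbf{Y}] = 1 - (1 - \eta)^T$. Since this is a valid coupling of the two product distributions, the coupling characterization of total variation immediately yields $D_1(A^{\otimes T} \| B^{\otimes T}) \le \Pr[\mathbf{X} \ne \mathbf{Y}] = 1 - (1 - \eta)^T$, which is the first claimed inequality.

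For the second inequality I would invoke Bernoulli's inequality $(1 - \eta)^T \ge 1 - T\eta$, valid for $\eta \in [0, 1]$ and $T \ge 1$, which gives $1 - (1 - \eta)^T \le T\eta = T \cdot D_1(A \| B)$; chaining with the first inequality produces $D_1(A^{\otimes T} \| B^{\otimes T}) \le T \cdot D_1(A \| B)$. For the equality claim, note that if the composed bound were tight then both inner inequalities would be equalities, in particular $1 - (1 - \eta)^T = T\eta$. But Bernoulli's inequality is strict whenever $T \ge 2$ and $\eta \in (0, 1)$, and a direct check handles the boundary $\eta = 1$ (where the left side is $1$ while the right side is $T \ge 2$). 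Hence equality forces $T = 1$ or $\eta = D_1(A \| B) = 0$.

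The proposition is genuinely easy, so there is no serious obstacle; the only mildly technical point is the measure-theoretic existence of a maximal coupling for arbitrary (possibly continuous) $A$ and $B$, which is standard. If one wishes to sidestep it entirely, the weaker bound $D_1(A^{\otimes T} \| B^{\otimes T}) \le T \cdot D_1(A \| B)$ follows directly from the triangle inequality for total variation via a hybrid argument that swaps one tensor factor from $A$ to $B$ at a time, at the cost of losing both the sharper $1 - (1 - \eta)^T$ form and the clean equality analysis.
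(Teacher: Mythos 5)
Your proof is correct and takes essentially the same approach as the paper's: the coupling characterization of total variation, $T$ independent copies of a one-coordinate coupling yielding $\Pr[\mathbf{X} \ne \mathbf{Y}] = 1 - (1-\eta)^T$, and Bernoulli's inequality for the derived claim. The only cosmetic differences are that the paper takes an infimum over arbitrary couplings rather than fixing a maximal one (thereby sidestepping the existence point you flag), and it leaves the Bernoulli/equality-case analysis implicit, which you spell out correctly.
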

\begin{proof}%
Recall that $D_1(A \| B)$ is the total variation distance between $A$ and $B$, which has the following characterization $\inf_{(X, Y)} \Pr[X \ne Y]$ where $(X, Y)$ is a coupling such that $X \sim A, Y \sim B$. Given any coupling $(X, Y)$ for $A, B$, we construct a coupling $((X_1, \dots, X_T), (Y_1, \dots, Y_T))$ of $A^{\otimes T}, B^{\otimes T}$ by sampling $(X_i, Y_i)$ independently according to the coupling $(X, Y)$. From this, we have
\begin{align*}
&\Pr[(X_1, \dots, X_T) \ne (Y_1, \dots, Y_T)] \\
&= 1 - \Pr[(X_1, \dots, X_T) = (Y_1, \dots, Y_T)] \\
&= 1 - \Pr[X=Y]^T.
\end{align*}
By taking the infimum over all $(X, Y)$ such that $X\sim A, Y \sim B$, we arrive at the desired bound.
\end{proof}

We also note that a simple observation that for all $P, Q$, the hockey stick divergence $D_{e^{\eps}}(P \| Q)$ is a $1$-Lipschitz in $e^\eps$.

\begin{proposition}\label{prop:hockey-lipschitz}
For $\eps_1 < \eps_2$, it holds that $D_{e^{\eps_1}}(P \| Q) - D_{e^{\eps_2}}(P \| Q) \le e^{\eps_2} - e^{\eps_1}$.
\end{proposition}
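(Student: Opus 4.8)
The plan is to reduce the statement to a one-line manipulation by treating $e^{\eps}$ as the relevant variable and exploiting that $Q(E) \le 1$ for every event $E$. Writing the hockey stick divergence as $D_{e^{\eps}}(P \| Q) = \sup_E \set{P(E) - e^{\eps} Q(E)}$ via \Cref{def:hockey-stick-divergence}, the key observation is that increasing the multiplier $e^{\eps}$ only subtracts more of the nonnegative quantity $Q(E)$, and that this quantity is bounded above by $1$. No structural properties of $P$ or $Q$ are needed beyond $Q$ being a probability measure.

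Concretely, for an \emph{arbitrary} fixed event $E$ I would add and subtract $e^{\eps_2} Q(E)$ to obtain
\[
P(E) - e^{\eps_1} Q(E) ~=~ \prn{P(E) - e^{\eps_2} Q(E)} + \prn{e^{\eps_2} - e^{\eps_1}} Q(E).
\]
The first bracket is at most $D_{e^{\eps_2}}(P \| Q)$ by definition of the supremum, and the second term is at most $e^{\eps_2} - e^{\eps_1}$, since $\eps_1 < \eps_2$ makes the coefficient $e^{\eps_2} - e^{\eps_1}$ nonnegative and $Q(E) \le 1$. Hence $P(E) - e^{\eps_1} Q(E) \le D_{e^{\eps_2}}(P \| Q) + \prn{e^{\eps_2} - e^{\eps_1}}$ for every measurable $E$.

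Taking the supremum over all $E$ on the left-hand side then yields $D_{e^{\eps_1}}(P \| Q) \le D_{e^{\eps_2}}(P \| Q) + \prn{e^{\eps_2} - e^{\eps_1}}$, which rearranges exactly to the claimed inequality. I expect essentially no obstacle: the argument is a direct consequence of the supremum definition together with $Q(E) \le 1$. The only point to handle with mild care is that the supremum defining $D_{e^{\eps_1}}(P \| Q)$ need not be attained, which is precisely why I would carry the inequality for an arbitrary event $E$ throughout and take the supremum only at the very last step, rather than fixing a putatively optimal set at the outset.
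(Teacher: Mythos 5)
Your proof is correct and takes essentially the same route as the paper's: the paper bounds the difference of suprema by the supremum of the pointwise differences, in which the $P(E)$ terms cancel and leave $\prn{e^{\eps_2} - e^{\eps_1}} \cdot \sup_E Q(E) \le e^{\eps_2} - e^{\eps_1}$, and your add-and-subtract decomposition for an arbitrary event $E$ followed by taking the supremum at the end is just a more explicit justification of that same step.
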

\begin{proof}
We have that 
\begin{align*}
D_{e^{\eps_1}}(P \| Q) - D_{e^{\eps_2}}(P \| Q)
&~=~ \sup_E \ [P(E) - e^{\eps_1} Q(E)] - \sup_{E'} \ [P(E') - e^{\eps_2} Q(E')]\\
&~\le~ \sup_E \ [P(E) - e^{\eps_1} Q(E) - P(E) + e^{\eps_2} Q(E)]\\
&~=~ \prn{e^{\eps_2} - e^{\eps_1}} \cdot \sup_E Q(E)\\
&~=~ e^{\eps_2} - e^{\eps_1}\,.
\qedhere
\end{align*}
\end{proof}

\begin{proof}[Proof of \Cref{thm:D-vs-P-separation}]
We prove each part as follows:

For part (a), first we consider the case of $\eps = 0$. In this case, $D_{e^{\eps}}(P \| Q)$ is simply the total variation distance between $P$ and $Q$. Recall that $\PP = A^{\otimes T}$ and $\QP = B^{\otimes T}$, where $A = (1-\frac1T) \QD + \frac1T \PD$ and $B = \QD$. Observe that $D_{1}(A\|B) = \frac1T \cdot D_{1}(\PD\|\QD)$. Thus, we have that
\begin{align*}
D_{1}(\PP \| \QP)
&=~ D_{1}(A^{\otimes T} \| B^{\otimes T})\\
&<~ T \cdot D_{1}(A \| B) \qquad \text{(\Cref{prop:union-bound})}\\
&=~ T \cdot \frac{1}{T} \cdot D_{1}(\PD \| \QD)\\
&=~ D_{1}(\PD \| \QD)\,.
\end{align*}
Note that the inequality is strict for $T > 1$.
Since $D_{e^{\eps}}(P \| Q)$ is continuous in $\eps$~(see \Cref{prop:hockey-lipschitz}),
there exists some $\eps_0 > 0$, such that for all $\eps \in [0, \eps_0)$, $\deltaD(\eps) > \deltaP(\eps)$.

For part (b), we construct an explicit bad event $E \subseteq \R^T$ such that $\PP(E) - e^{\eps} \QP(E) > D_{e^{\eps}}(\PD \| \QD)$. In particular, we consider:
\begin{align*}
    E := \textstyle \left\{ w \in \R^T \big | \sum_i w_i > (\eps + \log 2 + T \log T) \sigma^2 + {\frac{T}{2}}\right\}.
\end{align*}
The choice of $E$ is such that,
\begin{align*}
\textstyle\log \frac{\PP(w)}{\QP(w)}
&\textstyle~=~ \sum_{t=1}^T \log \frac{A(w_t)}{B(w_t)}\\
&\textstyle~=~ \sum_{t=1}^T \log \prn{1 - \frac1T + \frac1T \cdot e^{\frac{2w_t - 1}{2\sigma^2}}}\\
&\textstyle~\ge~ \sum_{t=1}^T \prn{\frac{2w_t - 1}{2\sigma^2} - \log T}\\
&\textstyle~\ge~ \frac{\sum_{t=1}^T w_t}{\sigma^2} - T \log T - \frac{T}{2\sigma^2}\\
&\textstyle~\ge~ \eps + \log 2 \qquad \text{ for all } w \in E\,.
\end{align*}
Hence it follows that $\log \frac{\PP(E)}{\QP(E)} \ge \eps + \log 2$, or equivalently, $\PP(E) \ge 2e^{\eps} \QP(E)$. This implies that $D_{e^{\eps}}(\PP \| \QP) \ge \frac12 \PP(E)$.

Next, we obtain a lower bound on $\PP(E)$. For $N_{\mu} := \cN(\mu, \sigma^2 I)$ and $p_k = \frac1{T^{-k}} (1 - \frac1T)^{T-k}$, it holds that
\begin{align}
\PP(E)
&\textstyle~=~ \sum_{\mu \in \{0, 1\}^T} p_{\|\mu\|_1} N_\mu(E)
~\ge~ N_{\mathbf{0}}(E) \nonumber\\
&\textstyle~=~ \Ncdf\prn{- \frac{\eps\sigma}{\sqrt{T}} - \frac{(T\log T + \log 2)\sigma}{\sqrt{T}} - \frac{\sqrt{T}}{2\sigma}}, \label{eq:deltaP-lower-bound}
\end{align}
where we use \Cref{prop:gaussian-halfspace-mass} in the last two steps.
On the other hand, we have from \Cref{prop:D-hockey} that
\begin{align}
\deltaD(\eps)
&\textstyle~=~ \Ncdf\prn{- \eps \sigma + \frac{1}{2\sigma}} - e^{\eps} \Ncdf\prn{- \eps \sigma - \frac{1}{2\sigma}}\nonumber\\
&\textstyle~\le~ \Ncdf\prn{- \eps \sigma + \frac{1}{2\sigma}}.\label{eq:deltaD-upper-bound}
\end{align}

There exists a sufficiently large $\eps_1$ such that
\begin{align}
\deltaP(\eps) &~\ge~ D_{e^{\eps}}(\PP \| \QP) \nonumber\\
&~\ge~ \textstyle \frac12 \PP(E)\nonumber\\
&~\ge~ \textstyle \frac12 \Ncdf\prn{- \frac{\eps\sigma}{\sqrt{T}} - \frac{(T\log T + \log 2)\sigma}{\sqrt{T}} - \frac{\sqrt{T}}{2\sigma}} \label{eq:eps-sigma-over-sqrt-T}\\
&~\ge~ \textstyle \Ncdf\prn{- \eps \sigma + \frac{1}{2\sigma}} \qquad \text{(for $\eps > \eps_1$)} \label{eq:eps-sigma}\\
&~\ge~ \deltaD(\eps)\,,\nonumber
\end{align}
by noting that for large $\eps$ the most significant term inside $\Ncdf(\cdot)$ in \eqref{eq:eps-sigma-over-sqrt-T} is $-\eps \sigma / \sqrt{T}$, whereas in \eqref{eq:eps-sigma} the most significant term inside $\Ncdf(\cdot)$ is $-\eps \sigma$, which decreases much faster as $\eps \to \infty$, for a fixed $T > 1$ and $\sigma > 0$.
\end{proof}
\end{document}